\newtheorem{theorem}{Theorem}
\begin{document}
\history{Date of publication xxxx 00, 0000, date of current version xxxx 00, 0000.}
\doi{10.1109/ACCESS.2017.DOI}

\title{A Broad Ensemble Learning System for Drifting Stream Classification}
\author{\uppercase{Sepehr Bakhshi}\authorrefmark{1},
\uppercase{Pouya Ghahramanian}\authorrefmark{2},
\uppercase{Hamed Bonab\authorrefmark{3}, and Fazli Can
}\authorrefmark{4}}
\address[1]{Department of Computer Engineering, Bilkent University, Ankara, Turkey  (e-mail: sepehr.bakhshi@bilkent.edu.tr)}
\address[2]{Department of Computer Engineering, Bilkent University, Ankara, Turkey, (e-mail: ghahramanian@bilkent.edu.tr)}
\address[3]{Manning College of Information and Computer Sciences, 140 Governors Dr., Amherst, MA 01003
(e-mail: bonab@cs.umass.edu)
}
\address[4]{Department of Computer Engineering, Bilkent University, Ankara, Turkey, 
(e-mail: canf@cs.bilkent.edu.tr)}
\tfootnote{This study is partially supported by TÜBİTAK grant no. 120E103.}

\markboth
{Author \headeretal: Preparation of Papers for IEEE TRANSACTIONS and JOURNALS}
{Author \headeretal: Preparation of Papers for IEEE TRANSACTIONS and JOURNALS}

\corresp{Corresponding author: Fazli Can (e-mail: canf@cs.bilkent.edu.tr).}
.

\begin{abstract}
In a data stream environment, classification models must handle concept drift efficiently and effectively. Ensemble methods are widely used for this purpose; however, the ones available in the literature either use a large data chunk to update the model or learn the data one by one. In the former, the model may miss the changes in the data distribution, and in the latter, the model may suffer from inefficiency and instability. To address these issues, we introduce a novel ensemble approach based on the Broad Learning System (BLS), where mini chunks are used at each update. BLS is an effective lightweight neural architecture recently developed for incremental learning. Although it is fast, it requires huge data chunks for effective updates, and is unable to handle dynamic changes observed in data streams. Our proposed approach named Broad Ensemble Learning System (BELS) uses a novel updating method that significantly improves best-in-class model accuracy. It employs an ensemble of output layers to address the limitations of BLS and handle drifts. Our model tracks the changes in the accuracy of the ensemble components and react to these changes. We present the mathematical derivation of BELS, perform comprehensive experiments with 20 datasets that demonstrate the adaptability of our model to various drift types, and provide hyperparameter and ablation analysis of our proposed model. Our experiments show that the proposed approach outperforms nine state-of-the-art baselines and supplies an overall improvement of 13.28\% in terms of average prequential accuracy.

\end{abstract}

\begin{keywords}
Data stream mining, Concept drift, Ensemble learning, Neural networks, Big data
\end{keywords}

\titlepgskip=-15pt

\maketitle

\section{Introduction}
\label{sec:introduction}
\PARstart{V}{arious} data stream sources generate an immense amount
of data in the blink of an eye. Social media, IoT devices, and sensors are all examples of such sources. The "3 V's of Big Data Management" summarizes the hurdles in this field. These are the Volume of the data, Variety which refers to numerous data types, and Velocity, which is one of the major problems in handling data streams due to its fast data arrival rate \cite{laney20013d}. Due to these hurdles, building models
that are capable of learning in streaming environments is a challenging
task. The developed methods require an approach specifically designed for this task, as it faces problems different from traditional machine learning. 

A major issue in data stream mining is \emph{concept drift} which refers to changes in the probability distribution of data over time \cite{widmer1996learning,lu2018learning,gama2014survey,gama2004learning}. If the change affects the decision boundaries it is referred to as \emph{real drift}, and if the distribution is altered without affecting the decision boundaries, it is called \emph{virtual drift} \cite{ramirez2017survey}. Concept drift is mainly categorized into four types: \emph{Abrupt}, \emph{Gradual}, \emph{Incremental}, and \emph{Recurring} \cite{lu2018learning}. In abrupt drift, the concept is suddenly altered to a new one which usually results in a prompt accuracy decline; however, gradual drift refers to replacement of the old concept with a new one in a gradual way. 
In incremental drift, an old concept changes to a new one incrementally over a period of time. The main difference between gradual and incremental drift is that in the former one, the class distribution is also prone to changes; however, in the latter, the values of the data are changed during a span of time \cite{iwashita2018overview}. Recurring drift occurs when an old and previously observed concept replaces the current one. 

\par
In terms of their implementation, data stream classification models in the literature could be categorized into two main approaches: \emph{chunk-based methods} and \emph{online learning} \cite{ditzler2015learning}. In a chunk-based approach, a fixed-size chunk of data is typically collected to update the model as new data items arrive; however, in an online learning approach, one data point at a time is used to update the system. Each approach has its pros and cons. Compared to online methods, a chunk-based approach learns faster at the start of the training because the model is seeded with a large initial chunk of data, making the update process more effective at the initial steps; however, a concept drift may occur later in the learning process, and if the drift is located within a chunk, it is possible that it will be missed. Such a concept drift may result in a significant reduction in model accuracy.
In the case of online learning, the latter problem does not affect the performance of the model; however, the model may suffer from slower initial learning performance \cite{li2020incremental}. Another problem of an online model is its runtime. Compared to the chunk-based models, online learning methods are less efficient and require more computations. Online models also suffer from instability as they learn the data one at a time \cite{li2020incremental}.

\textbf{Motivation.}
The disadvantages of using an online or chunk-based model motivates us to propose a method to alleviate the issues in these two approaches while keeping their positive features. Our main goal is to propose a resilient model that is able to function effectively and efficiently in a stream environment. To do so, we propose our solution based on Broad Learning System (BLS) \cite{chen2017broad}, a lightweight neural network. Using lightweight neural networks for concept drift adaptation is a relatively unexplored research avenue \cite{ijcai2022p788} and in this work, we aim to explore its potential for stream processing.
\par
BLS solves a much simpler least square equation for training the model in place of time-consuming loss calculations and back propagation. Although an incremental version of BLS is introduced in the original paper, BLS is not suitable yet to be used in a stream environment for the following two reasons: (i) In incremental mode, BLS is effective only when the chunk size is large; and (ii) The model has no mechanism to handle concept drift.
Since using large chunks may result in missing concept drifts inside a chunk, we use mini chunks (2 $\leq$ chunk size $\leq$ 50). Above all, to handle the problem of slow learning in initial steps of an online model, or a model that is trained with mini chunks, we adapt the BLS to learn with small chunk sizes and have a comprehensive feature mapping and output layer. We track the changes in the accuracy of the ensemble components, and remove, replace, or add them to the model at each time step. With this frequent exchange of the ensemble components between a pool of removed ensemble components and the ensemble itself, we create an ensemble of classifiers of good variety, as each of these ensemble components is trained with different data. This enables our model to react faster to concept drifts and in case of false removal of a component, it is quickly returned to the learning process. The same feature also helps our approach to be resilient. As mentioned by Vardi \cite{vardi2020efficiency}, “\textit{resilience via distributivity and redundancy is one of the great principles of computer science}”, and this principle is one of the main reasons that ensemble approaches function effectively in stream environment, and compared to a single classifier, are more resilient to changes in the data distribution. This redundancy (using several classifiers in an ensemble) decreases the efficiency of the model. Given that learning with mini chunks and using an ensemble both have a negative impact on the efficiency of the model in terms of computational cost, we only use output layers of the Broad Learning System as our ensemble components, and use a single feature mapping and enhancement layer. Meaning that the ensemble consists of a part of BLS, and not the whole BLS model. This results in a significant reduction in computational burden. In depth technical aspects of our ensemble model is demonstrated in the following sections.\par

\textbf{Contributions. }Our main contributions are the following. We

\begin{itemize}[leftmargin=*,align=left]
   \item {Design and mathematically derive an enhanced version of BLS suitable for a stream environment, trained with small chunks of data;}
   
   \item Propose an efficient and effective passive ensemble approach for concept drift handling based on tracking the changes in the accuracy of each ensemble component, and utilizing the output layers of the BLS as the ensemble components;

   \item Conduct experiments on 20 datasets with various concept drift types, and compare our results with nine state-of-the-art baselines.
\end{itemize}
\par
 In the upcoming sections, we first review the related works in the literature in Section \uppercase\expandafter{\romannumeral2\relax}. 
 Next, we explain our proposed approach in detail in Section \uppercase\expandafter{\romannumeral3\relax}. Then the experimental design is presented in Section \uppercase\expandafter{\romannumeral4\relax}. We report our experimental results, and compare our model with the baselines 
 in Section \uppercase\expandafter{\romannumeral5\relax}. We conclude our work and specify a future direction in Section \uppercase\expandafter{\romannumeral6\relax}.

\section{Related Work}
In this section, we study the proposed approaches for data stream classification from three different perspective. We begin by reviewing the methods \textit{for concept drift adaptation} in the literature, then we present the ensemble approaches and categorize them into \textit{active vs. passive}, and \textit{chunk-based vs. online} algorithms \cite{ditzler2012incremental}.
\label{sec:guidelines}
\par
\textbf{Concept Drift Adaptation. }For concept drift adaptation, two approaches are mainly studied in the literature. As mentioned by Gama \emph{et 
al.}\cite{gama2014survey} these two \textit{model management} techniques are \textit{single classifiers} \cite{ditzler2015learning} and \textit{ensemble methods} \cite{bonab2019less}.

A single classifier is accompanied with a concept drift detector. The detection algorithm utilizes alerts to warn the classifier of a drift. To find a drift point, detection methods usually follow the error rate, and trigger an alarm in the case of an unusual decline in overall accuracy. DDM\cite{gama2004learning}, EDDM\cite{baena2006early},  ADWIN\cite{bifet2007learning}, and OCDD\cite{gozuaccik2021concept} use this strategy. 
Another approach is to keep track of the statistical changes in the data distribution. PCA-CD\cite{qahtan2015pca}, EDE\cite{gu2016concept}, and  CM\cite{lu2016concept} are designed based on this method. After an alarm is triggered, the classifier usually restarts learning from that point on. kNN and the Hoeffding Tree\cite{hulten2001mining} are among the most popular classifiers used for this purpose. All the above-mentioned methods are supervised; however, unsupervised detection of concept drift is also studied by Gözüaçık \emph{et al.} in a recent work \cite{gozuaccik2019unsupervised}. Another work on unsupervised concept drift detection, but this time on multi-label classification, is presented by Gulcan \emph{et al.}\cite{gulcan2022unsupervised}.
\par
Models that use a single classifier are efficient; however, retraining the model from scratch or replacing it with a new classifier results in a delay in the learning process, since the model loses useful information learned so far; furthermore, in the case of a recurrent drift, the model should learn an already learnt concept from scratch.
\par

\par
In ensemble-based models a combination of learners is used to make the final decision. For concept drift adaptation, ensemble methods use various techniques; however, there are a few similar strategies that most of them utilize to maintain their high effectiveness during the learning process. For instance, they may have an adaptive strategy that adds and removes the classifiers based on their performance. Some ensemble methods preserve the old classifiers that are removed from the ensemble \cite{gama2014survey}, and utilize these classifiers in the later stages of the learning process based on the needs of the model. This approach helps the ensemble model to have an effective reaction to concept recurrence. Our proposed approach uses this strategy, but
one of the drawbacks of preserving the old classifiers is its storage and computational burden. Using a limit for the number of preserved classifiers is a simple yet effective strategy that we use for alleviating this issue. Another considerable advantage of declaring a limit for the number of classifier components of an ensemble is that controlling the ensemble size provides a consistent runtime during stream classification. The computational load of the model increases substantially as the number of classifier components increases, and this leads to an
 inefficient and sometimes broken system that is unable to
 process new data.
\par

Apart from the brief description of ensemble methods presented above, we classify them from two perspectives. Firstly, the way they handle concept drift. In this sense, ensemble approaches fall into two subgroups: \textit{Active and Passive}. Secondly, the data processing criteria they use. Here there are basically two choices \textit{Chunk-Based and Online}.

\textbf{Active vs. Passive Ensemble Methods. }
  Active methods rely on a concept drift detection method to trigger an alarm. Then, the ensemble model reacts to this drift by adding new classifiers, updating them, or restarting their learning process. Adaptive Random Forest (ARF)\cite{gomes2017adaptive},  Leverage bagging\cite{bifet2010leveraging},  Adaptive Classifiers Ensemble (ACE)\cite{nishida2005ace}, Heterogeneous Dynamic Weighted Majority (HDWM)\cite{idrees2020heterogeneous},  comprehensive active learning method for multi-class imbalanced streaming data with
concept drift (CALMID)\cite{liu2021comprehensive}, and Streaming Random Patches (SRP) \cite{gomes2019streaming} are among the well-known algorithms in this category.
 \par
 In passive models however, no concept drift detector is used, and usually a weighting strategy helps the model to adapt to the new changes \cite{krawczyk2017ensemble}. The model may also add or remove classifiers from the ensemble. With this method, the ensemble relies on the most recent data to make a prediction \cite{ditzler2015learning}. Additive Expert Ensembles (AddExp) \cite{kolter2005using},  Dynamic Weighted Majority (DWM)\cite{kolter2007dynamic}, Geometrically Optimum and Online-Weighted Ensemble (GOOWE)\cite{bonab2018goowe}, Learn++.NSE \cite{elwell2011incremental}, Resample-based Ensemble Framework for Drifting Imbalanced Stream (RE-DI)\cite{zhang2019resample}, and Kappa Updated Ensemble (KUE) \cite{cano2020kappa}
 fall into this category .
 \par

\par
\textbf{Chunk-Based vs. Online Ensembles. } 
In a chunk-based ensemble, a chunk of data is collected before each update. Using large chunks of data causes some problems like missing the actual drift point. This leads to delayed response in case of a concept drift, and decreases the accuracy of the model. In contrast to online models, chunk-based models are more efficient as they process a chunk of data simultaneously instead of processing the data one by one. Accuracy Weighted Ensemble (AWE)\cite{wang2003mining}, Learn++.NSE \cite{elwell2011incremental}, and Accuracy Updated Ensemble (AUE)\cite{brzezinski2011accuracy} are in this category.
\par
Unlike the chunk-based models, online ensembles process each data item separately, and they do not need to gather a chunk of data, and this leads to less memory usage. On the other hand, processing a single data instance at a time is time consuming, and may result in instability of the model. Meaning that small changes in the input may negatively affect the learning power of the model. Diversity of Dealing with Drift (DDD) \cite{minku2011ddd}, DWM\cite{kolter2007dynamic}, AddEXP  \cite{kolter2005using} are examples for online ensembles.

\par

 \begin{figure*}[h]
    \centering
    \includegraphics[ scale = 0.29]{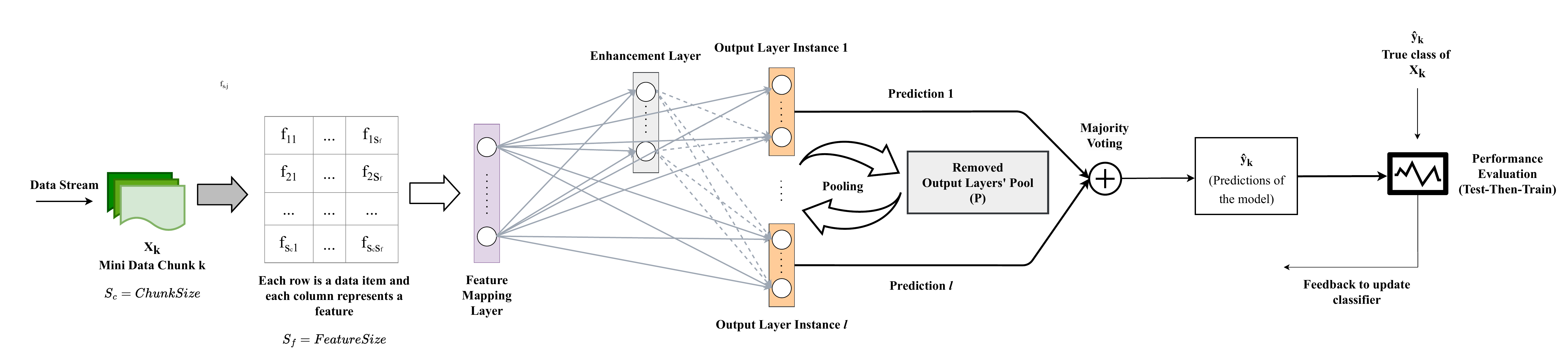}
    \caption{Overall schema of Broad Ensemble Learning System (BELS) for drifting stream classification.}
    \label{fig:BELS_figure}
\end{figure*}

\section{Method}
This section is organized as follows. First we introduce the problem that we aim to address. Next, we provide a brief overview of the BLS approach. Finally, we present the technical details of our proposed solution, which we call BELS.
 \subsection{Problem Definition}
In a data stream environment, data is continuously generated via a source over time. We refer to a data chunk at time step $k$ as $X_k$, and define the problem of data stream classification as follows: First, the model generates the probability values for each class in order to predict the class of incoming data chunk $X_k$.
Assuming that there are $C$ predefined classes, $s_k$ is the probabilities vector with the length of the number of labels. For each instance, we assume that the correct label becomes available at time step $k+1$, making the prequential evaluation \cite{gama2009issues} process possible. The model is then incrementally updated using the correct labels of $X_k$ and the obtained features. Fundamentally, this process is the assumption used in the majority of the data stream classification studies, and is known as the test-then-train learning paradigm in the literature \cite{gama2009issues}. As a result, every data instance is used both for training and testing. We propose BELS as being specifically designed for this problem setting. In the following subsections, we introduce our method step by step.
 \par
 
\subsection{BLS: Broad Learning System}

Broad Learning System (BLS) \cite{chen2017broad} achieves high performance in static environments, both in terms of runtime and accuracy. In its architecture, number of layers is minimized and instead, each layer consists of several number of nodes. In the first stage, BLS takes the input data and creates a feature mapping layer using a linear function. Then an activation function is used in the enhancement layer, and the output of the feature mapping layer is fed to it.
Next, the output of both the feature mapping layer and the enhancement layer is concatenated and fed to the output layer. To determine the output weights, BLS solves the least squares problem by finding the pseudoinverse. 
\par
 In the feature mapping layer, the data is mapped using  $\phi_i(X{W_e}_i + {\beta_e}_i)$ function, where ${W_e}_i$ and ${\beta_e}_i$ are random weights and bias of the $i$th mapped feature respectively. ${W_e}_i$ and ${\beta_e}_i$ are initiated randomly. Each mapped feature is used to form a group of $n$ mapped features by concatenation, where $n$ is the maximum number of feature mapping nodes in each group \cite{chen2017broad}.

 This concatenation is denoted as $Z^n = [ Z_1, ..., Z_n]$ \cite{chen2017broad}. 
 Next, the output of this feature mapping layer is fed to an activation function and form the enhancement layer. Furthermore, $i$th feature map is used to form the $j$th enhancement node $H_j$ as follows: $\xi(Z^i {W_h}_j + {\beta_h}_j)$ \cite{chen2017broad}. Like the feature mapping layer, the concatenation of $m$ enhancement nodes are grouped as $H^m = [ H_1, ..., H_m]$ \cite{chen2017broad} where $m$ is the maximum number of enhancement nodes.
Based on the original BLS paper\cite{chen2017broad}, the broad model is defined as follows:
\par

\par
$Y = [Z_1,...,Z_n|\xi(Z^n {W_h}_1 + {\beta_h}_1) ,... ,\xi(Z^n {W_h}_m + {\beta_h}_m)]W $
\par
$= [Z_1,...,Z_n|H_1,...,H_m]W$
\par
$= [Z^n|H^m]W$
\par

The ultimate goal of the system is to find $W$ by solving a least square problem. 
\par
BLS proposes an incremental way of updating the system for large datasets. The incremental approach uses a large chunk of data to update the system at every step; however, there are three main reasons that make BLS an ineffective system while dealing with data streams: i) BLS uses the initial set of incoming data for generating the feature mapping. The problem is that in each time step, the system uses the same data representation without any update. ii) By using a smaller chunk size, the proposed pseudoinverse updating in BLS is not effective, since it needs a very large chunk of data for an effective update. iii) There is no mechanism to handle concept drift in BLS and its proposed variants. Note that our proposed model is different from other variations in \cite{chen2018universal}.  In \cite{chen2018universal}, the proposed variations of BLS are not designed for a stream environment, and can not handle concept drift.
Furthermore, our model utilizes an ensemble approach and focuses on using small chunk sizes for tracking the changes in the accuracy of the ensemble components. 
\par

\subsection{BELS: Broad Ensemble Learning System}
To tackle the problems mentioned in Section \uppercase\expandafter{\romannumeral3\relax}.B we propose BELS.
In Sections \uppercase\expandafter{\romannumeral3\relax}.C.1 and \uppercase\expandafter{\romannumeral3\relax}.C.2 we propose a solution for considering the whole data items for generating the feature mapping, enhancement, and output layers, which prepares the model for effective updates with smaller chunk sizes ($S_c$$\leq$ $1000$). In Section \uppercase\expandafter{\romannumeral3\relax}.C.3 we introduce our ensemble approach for dealing with concept drift. 
\par
\algnewcommand\algorithmicinput{\textbf{Input:}}
\algnewcommand\algorithmicoutput{\textbf{Output:}}
\algnewcommand\Input{\item[\algorithmicinput]}%
\algnewcommand\Output{\item[\algorithmicoutput]}%

  Figure \ref{fig:BELS_figure} shows the overall structure of BELS. As we can see in this figure, when the first chunk of data is received, it is fed to the feature mapping layer. Then output of feature mapping layer is fed to the enhancement layer. Subsequently, the output of these two layers are concatenated and fed to an ensemble of output layers, and the final decision is made based on the majority voting. We utilize a pool of removed output layers where the ones removed from the ensemble are kept in it. Frequent exchange of the removed output layer instances in the pool and the active ones in the ensemble is used as an strategy for handling concept drift. Detailed explanation of each step is presented in the following subsections. 
 
\subsubsection{Updating the Feature Mapping in BELS}
    BLS employs a sparse autoencoder to overcome the randomness of the generated features. We use the same method to deal with this issue; however, despite BLS, we update this feature representation after each chunk of the data.   \par
   To obtain this feature representation, BLS uses an iterative method. Eq. \eqref{eq2} is defined in BLS paper for this purpose \cite{chen2017broad}. The output of these iterative steps is denoted as $\mu$. 
    
\begin{flalign}
    \begin{cases}
      w_{i+1} := \left(z^T z+\rho I\right)^{-1} \left(z^T X_k + \rho \left(o^i - u^i \right) \right)\\
      o_{i+1} := S_{\lambda/\rho} \left(w_{i+1} + u^i \right)\\
      u_{i+1} := u^i + \left(w_{i+1} - o_{i+1} \right)
    \end{cases}&&
    \label{eq2}
\end{flalign}
\par

In Eq. \eqref{eq2}, $X_k$ is the input data in time step $k$ and ${z}$ is the projection of the input data using $XW_{e_i} + \beta_{e_i}$ for the  \textit{i}th feature group. $W_e$ and $\beta_{e}$ are randomly generated with proper dimensions initialized at the beginning of the first time interval. Note that we apply the same $W_e$ and $\beta_{e}$ during the training for each update. $u^i$, $o^i$, and $w^i$ are initialized as zero matrices at the beginning of the iteration. These matrices are only used to update purpose in iterative steps of Eq. \eqref{eq2}. In the formula, $\rho >0 $, $I$ is the identity matrix, and $S$ is the soft thresholding operator \cite{chen2017broad}. In Eq. (\ref{eq:s}), (a) is the sum of $w_{i+1}$ and $u^i$. In our experiments we use $\kappa = 0.001$. $S$  is calculated as follows \cite{chen2017broad}: 
\begin{equation}
S_\kappa(a) = 
\begin{cases}
    a - \kappa   ,\quad  a > \kappa \\
    0 ,\quad  \quad  |a| <= \kappa\\
    a + \kappa ,\quad  a < -\kappa
\end{cases}
\label{eq:s}
\end{equation}
\par

While considering the incremental input in a stream environment, the projection of data is different for each chunk of data. Meaning that we can not utilize the first set of data to calculate the $\mu$, and use the same $\mu$ for the rest of the incoming data. Additionally, if the data chunk is small, then this projection is not comprehensive enough. To solve this problem, we propose an updating system for Eq. \eqref{eq2}, that in each step $k$, ${\mu}^{i}$ is the projection of the entire data from step one to step $k$. In each time step, the system uses the renewed $\mu$, and this helps the model to have a comprehensive feature mapping layer that represents the entire data up to that time step. This technique improves the accuracy of the model drastically when dealing with both small and large chunks.
\par
To implement this idea, we need to update $z^{T}X_k$ and $z^T{z}$ in each time step for every set of mapping features in $k$, separately.
\par

Let us denote $z^{T}X_k$ as $T_1$. We know that dimensions of $T_1$ are independent of the number of data instances in each time step, and depend on the number of feature mapping nodes, as well as the number of features in each data instance. Based on this fact, we can use the following theorem in our method:

\begin{theorem}\label{foo}
For two matrices $A$ and $A^\prime$ with the same number of columns, if we multiply $A^T$ with $A$, and ${A^\prime}^T$ with $A^\prime$, the result of both multiplications are square matrices with the equivalent size. We refer to them as $A_t$ and $A^\prime_t$, respectively. Let us concatenate $A$ and $A^\prime$ vertically and denote the new matrix as $A_c$. The product of $A_c^T$ and $A_c$ is a square matrix equal to sum of $A_t$ and $A^\prime_t$. The hypothesis can be formulated as follows:
\begin{equation}
    A_c^T A_c = A^T A + A^{\prime T} A^\prime \quad  \text{where:} \quad {A_c} = 
    \begin{bmatrix} 
        A &|& A^\prime
    \end{bmatrix}
    \label{eqn:maineqn}
\end{equation}

\end{theorem}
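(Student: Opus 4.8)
The plan is to verify the identity directly, since the claim is really just a statement about how a Gram matrix behaves under vertical concatenation, and nothing deeper is involved. First I would pin down dimensions: say $A$ is $p \times c$ and $A'$ is $q \times c$, where $c$ is the common number of columns. Then the vertical concatenation $A_c$ is $(p+q) \times c$, and each of $A^T A$, $A'^T A'$, and $A_c^T A_c$ is a $c \times c$ matrix. This observation already settles the ``square matrices of equivalent size'' part of the statement, so only the sum identity remains.

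Next I would write $A_c$ in block form, $A_c = \begin{bmatrix} A \\ A' \end{bmatrix}$ (the bar in $[A \mid A']$ here indicating this stacking, in keeping with the paper's concatenation convention), so that $A_c^T = \begin{bmatrix} A^T & A'^T \end{bmatrix}$. Multiplying these two conformably partitioned matrices gives $A_c^T A_c = A^T A + A'^T A'$ in one line, which is exactly Eq.~\eqref{eqn:maineqn}.

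If one prefers not to invoke block multiplication as a rule, I would instead argue entrywise: for any $1 \le i,j \le c$ we have $(A_c^T A_c)_{ij} = \sum_{\ell=1}^{p+q} (A_c)_{\ell i}(A_c)_{\ell j}$, and splitting the sum at $\ell = p$ — where the rows of $A_c$ change from those of $A$ to those of $A'$ — yields $\sum_{\ell=1}^{p} A_{\ell i}A_{\ell j} + \sum_{\ell=1}^{q} A'_{\ell i}A'_{\ell j} = (A^T A)_{ij} + (A'^T A')_{ij}$. Summing over all $i,j$ gives the matrix identity.

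There is no genuine obstacle here; the only points requiring care are the notational convention (that the concatenation is along rows, not columns, despite the $\mid$ symbol) and checking that the blocks are conformable, both of which are immediate once the dimensions are fixed. In the intended application this is what lets BELS maintain $z^T z$ (and likewise $z^T X_k$) incrementally: the Gram matrix of all data seen through step $k$ equals the running sum of the per-chunk Gram matrices, so earlier chunks never need to be revisited.
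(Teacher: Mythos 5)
Your proof is correct and, in its entrywise form, is essentially identical to the paper's own argument: both split the sum $\sum_{\ell=1}^{p+q}(A_c)_{\ell i}(A_c)_{\ell j}$ at the row index where $A$ ends and $A'$ begins, then reindex the second piece. The block-multiplication one-liner you offer first is a harmless (and arguably cleaner) packaging of the same computation, so there is nothing further to add.
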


\begin{proof}

    Let $A$ be an \textit{m} by \textit{n} matrix and let $A^\prime$ be an $m^\prime$ by n matrix. $A_c$ is obtained by concatenating $A$ and $A^{\prime}$ matrices vertically as follows:

    \begin{equation*}
    \centering
        A_{c} =
        \begin{bmatrix}
            a_{11} & a_{12} & \dots & a_{1n} \\
            a_{21} & a_{22} & \dots & a_{2n} \\
            \vdots & \vdots & \ddots & \vdots \\
            a_{m1} & a_{m2} & \dots & a_{mn} \\
            a^{\prime}_{11} & a^{\prime}_{12} & \dots & a^{\prime}_{1n} \\
            a^{\prime}_{21} & a^{\prime}_{22} & \dots & a^{\prime}_{2n} \\
            \vdots & \vdots & \ddots & \vdots \\
            a^{\prime}_{m^{\prime}1} & a^{\prime}_{m^{\prime}2} & \dots & a^{\prime}_{m^{\prime}n} \\
        \end{bmatrix}
    \end{equation*}

    Left-hand side of the Eq. \eqref{eqn:maineqn} is the multiplication of $A_c^T$ and $A_c$ matrices. Let us denote the element at i\textit{th} row and j\textit{th} column of the resulting matrix as $l_{ij}$. Also, let the (i, j)\textit{th} element of the resulting matrix on the right-hand side of the Eq. \eqref{eqn:maineqn} be represented by $r_{ij}$. In the following, we show that these two elements are equal regardless of i and j values, meaning that the resulting matrices in the left and right-hand sides of the Eq. \eqref{eqn:maineqn} are equal. \\
    \begin{equation*}
        \begin{gathered}
            l_{ij} = \sum_{k=1}^{m+m^{\prime}} A_c^{ki} \times A_c^{kj} = \sum_{k=1}^{m} A_c^{ki} \times A_c^{kj} + \sum_{k=m+1}^{m+m^{\prime}} A_c^{ki} \times A_c^{kj} \\
            = \sum_{k=1}^{m} a_{ki} \times a_{kj} + \sum_{k^{\prime}=1}^{m^{\prime}} a_{k^{\prime}i} \times a_{k^{\prime}j}
        \end{gathered}
    \end{equation*}
    \begin{equation*}
        r_{ij} = (A^T A)^{ij} + (A^{\prime T} A^\prime)^{ij} = \sum_{k=1}^{m} a_{ki} \times a_{kj} + \sum_{k^{\prime}=1}^{m^{\prime}} a_{k^{\prime}i} \times a_{k^{\prime}j}
    \end{equation*}       

\end{proof}

Based on Theorem 1, Eq. \eqref{eq:1} is used to update $T_1$ value. 

\begin{equation}
   T_{{1}_{k}} = \sum_{i=1}^{k}T_{{1}_i}
\label{eq:1}
\end{equation}
\par
Let us denote $z^T{z}$ as $T_2$. Assuming that the number of columns in $z$ does not change during the training phase, based on Theorem 1, we define the Eq. \eqref{eq3} to update $T_2$ at each step $k$:
\par
\begin{equation}
   T_{{2}_{k}} = \sum_{i=1}^{k} T_{{2}_i}
   \label{eq3}
\end{equation}
\par
We use $T_{2_k}$ and $T_{1_k}$ as the input for the modified version of Eq. \eqref{eq2}, and utilize it as in Eq. \eqref{eq5}.
\begin{flalign}
    \begin{cases}
      w_{i+1} := \left(T_{2_k}+\rho I\right)^{-1} \left(T_{1_k} X_k + \rho \left(o^i - u^i \right) \right)\\
      o_{i+1} := S_{\lambda/\rho} \left(w_{i+1} + u^i\right)\\
      u_{i+1} := u^i + \left(w_{i+1} - o_{i+1} \right)
    \end{cases}&&
    \label{eq5}
\end{flalign}

\par
The enhancement nodes are created using the following formula \cite{chen2017broad}:
\par
\begin{equation}
  H_j = [\tanh({{Z_k}^n} W_{h_j} + \beta_{h_j}) ]
   \label{eq6}
\end{equation}
\par

${Z_k}^n$ is a set of feature mapping nodes at time step $k$, and $W_{h_j}$ and $\beta_{h_j}$ are generated randomly. It's noteworthy that enhancement layer is used as it is proposed in \cite{chen2017broad}.

\par
While updating the model, the random weights are fixed and the enhancement nodes are updated at each step. The process of updating the feature mapping and enhancement layers are shown in Algorithm 1.
\par
After concatenating the output of the feature mapping layer and the enhancement layer horizontally,  the next step is calculating the pseudoinverse for the least square problem discussed in BLS \cite{chen2017broad}. 
\par

\begin{algorithm}
\caption{Feature mapping and enhancement layer update}

\begin{algorithmic}[1]
\Input  {Data chunk $X_k$}
\Output {A set of feature mapping and enhancement nodes denoted as $A_k$
}
\State{Initiate random $W_e$, $W_h$, $\beta_e$ and $\beta_h$ at the beginning
  }
\State{$X_k$ = data instances at step $k$}

\For{i=0 ; i $\leq$ n }
    \State{$z =  (X_kW_{e_i} + \beta_{e_i})$}
    
    \State{$T_1  ={X_k} {z}^T $}
    \State{${T_2}  ={z}^T z$}
    \If {$k=0$}
        
         \State{${T_{1_{k}}}  = T_1$}
    
        \State{${T_{2_{k}}}  = T_2$ }
       
    \Else
        \State{${T_{1_{k}}}  ={T_{1_{k-1}}} + T_1$}
    
        \State{${T_{2_{k}}}  ={T_{2_{k-1}}} + T_2$} 
    
   \EndIf
    \State{Calculate $\mu_i$ with Eq. \eqref{eq5}}
    \State{$Z_i= X\mu_i$}
   \EndFor
    \State{Set the feature mapping group ${Z_k}^n = [Z_1,...,Z_n]$}
   
    \For{j $\leftarrow$ 1 ; j $\leq$ m }
    
        \State{calculate $H_j = [\tanh({{Z_K}^n} W_{h_j} + \beta_{h_j}) ]$ with Eq. \eqref{eq6}}
    \EndFor
     \State{Set the enhancement node group ${H_k}^m = [H_1,...,H_m]$\par
     $A_k = [{Z_k}^n | {H_k}^m]$ }

\end{algorithmic}
\end{algorithm}

\subsubsection{Updating the Pseudoinverse in BELS}
 Unlike BLS, where the pseudoinverse is calculated based on the instances of the last chunk of data, we revise this calculation such that it represents the pseudoinverse of the whole data until that time step.
\par

Suppose that $A$  is the result of concatenating the output of the feature mapping layer and output of the enhancement layer horizontally. 
To obtain the pseudoinverse, BLS uses Eq. \eqref{eq7} \cite{chen2017broad}:
\begin{equation}
   W = {(\lambda I + A{A^T})}^{-1} {A^T}Y
   \label{eq7}
\end{equation}

where $Y$ is the labels of a chunk of training data and $\lambda$ is a small positive value added to the diagonals of $A$.
To calculate weights of the output layer, which is considered as the solution for the least square problem, we update $A{A^T}$ at each time step $k$ and refer to it as ${A}_{t}$ :
\begin{equation}
   {A}_{t} = {{A^T}_{k}}A_{k}
\end{equation}

Then, we multiply ${A^T}_{k}$ by $Y_k$, which is the set of labels at time step $k$ and define it as follows:
\begin{equation}
   D_t = A^T_k Y_k
\end{equation}
\par
 ${A_{t}}_{k}$ and ${D_{t}}_{k}$ values are obtained as:
\begin{equation}\label{eq:9}
  A_{{t}_{k}} = \sum_{i=1}^{k} A_{{t}_i } \quad\text{and}\quad 
   D_{{t}_{k}} = \sum_{i=1}^{k} D_{{t}_i}
\end{equation}
Based on theorem 1, we know that at the end of step $k$, ${A_{t}}_{k}$ and ${D_{t}}_{k}$ are equal to ${A}_{t}$ and ${D_{t}}$ of the entire data until that time step, respectively. The process is shown in Algorithm 2. First we calculate ${A_{t}}_{k}$ and ${D_{t}}_{k}$ (Algorithm 2:1-9). Then, Eq. \eqref{eq10} is used to update the pseudoinverse (Algorithm 2:10).
\begin{equation}
     W = {\left(\lambda I + A_{t_k}\right)}^{-1} D_{t_k}
     \label{eq10}
\end{equation}

\par 

For testing, we use a similar approach to BLS \cite{chen2017broad}; however, just like the previous steps, we separate the calculations related to feature mapping and enhancement layers, and output layer. Let us assume that ${A_{test}}_{k}$ is the concatenation of test results of feature mapping and enhancement layers. We use ${A_{test}}_{k}$ for producing the prediction based on the following formula in the final stage \cite{chen2017broad}:  
\begin{equation}
     \hat{y} = {{A_{test}}_{k} W}
     \label{eq14}
\end{equation}

\begin{algorithm}
\caption{Output layer weight calculation}

\begin{algorithmic}[1]
\Input  {A set of feature mapping and enhancement nodes denoted as $A_k$}
\Output {$W$}

\State{$A_t = A_k^T A_k$}
\State{ $D_t = A_k^T Y_k$}
\If {$k=0$}
        
     \State{${A_{t_{k}}}  =A_{t} $}

    \State{$D_{t_k}  = D_{t} $ }
   
\Else
    \State{${A_{t_{k}}}  ={A_{t_{k-1}}} +A_{t} $}

    \State{$D_{t_k}  = D_{t_{k-1}} +D_{t} $} 
\EndIf
\State{$W = (\lambda I + A_{t_k}) ^ {-1} D_{t_k}$}
\end{algorithmic}
\end{algorithm}

\begin{algorithm}[t]
\caption{BELS (Broad Ensemble Learning System) }
\begin{algorithmic}[1]
\Require $D$ : data stream,  $X_k$:  data chunk at step k,               $Y_k$: labels of the data chunk at step k, $\delta$: accuracy threshold

    \Ensure 
            ${\hat{y}}_k$: prediction of the ensemble as a score vector at step $k$
    \While{$D$ has more instance}

        \If {$\xi$ is not full}\par
        \State {
           $\xi$ $ {\leftarrow}$ add new output layer
        }
        \ElsIf{$\xi$ is full \textbf{or} $L$ length $>$ ($\xi$ length /2)}
        
        \For{i $\leftarrow$ 0 ; i $\leq$ $L$ length     - 1 }
        \State{    remove $\xi$$[$L$[i]] $}\par
        \If{L length $>$ ($\xi$ length/2)}
            \State{$P$ $\leftarrow$ $\xi$$[L[i]] $}
            
        \EndIf
        \EndFor
        \While{$\xi$ is not full and $i$ $<$  $bP$ length}
        
        \State{$\xi$ $\leftarrow$  $bP[i] $}
        \State{ remove $bP[i]$ from $P$} 
        
        \EndWhile
        \If{$P$ length $>$ $M_p$}
           
            \State{keep the last $M_p$ instances of $P$}
         
        \EndIf
    \EndIf
   
        \State{Calculate ${{A}_{test}}_{k}$\par}
        \For{i $\leftarrow$ 0 ; i $\leq$ $\xi$ length}
            \If{${O}_i$ is initialized}
                \State{${s_{i}}^k$,  ${acc_i}^k \leftarrow$     Prediction \& acc of $O_i$ Eq.\eqref{eq14}
                
            \If{chunk size!= 2 }
                
                \State{$\delta$= overall accuracy}
            \EndIf
                \If{ ${acc_i}^k$ $<$ $\delta$}
                
                    \State{$L$ $\leftarrow$ i}\par
                
                \EndIf
            \EndIf
        \EndFor
        \State{${\hat{y}}_k$ $\leftarrow$ Use the set of ${s_{i}}^k$ for hard voting} 
        \For{j $\leftarrow$ 0 ; j $\leq$ $P$ length  }
        {
             \State{ ${acc_j}^k$ $\leftarrow$  Test $P[j]$ using (Eq. \eqref{eq14})}
             \If { ${acc_j}^k$ $>$ $\eta$}
             {
                $bP$ $\leftarrow$ $P[j]$
             }
             \EndIf
        }
        \EndFor
        \State{$A_k$ $\leftarrow$ update $F$ and $E$ using Algorithm 1. }
        \For{i $\leftarrow$ 0 ; i$\leq$ $\xi$ length  }
            
            \State{update ${O}_i$ using Algorithm 2.} 
            
        \EndFor

\EndWhile
}
\end{algorithmic}
\end{algorithm}

\subsubsection{Concept Drift Adaptation in BELS}
For concept drift handling, we use an ensemble approach. Our approach is passive as we do not use any concept drift detection mechanism. Simply, we keep updating the feature mapping layer and enhancement layer as new data arrive; however, an ensemble of output layer instances is used to determine the final result. Each output layer instance is a collection of output layer nodes.
\par 
There are two main reasons for excluding the feature mapping and enhancement layers from our ensemble: 
\begin{enumerate*}[label=(\roman*)]
\item Using an ensemble of a whole BLS model is not efficient as it needs more calculations, \item Initializing the feature mapping and enhancement layer for each arriving data chunk delays the learning process, because the initial feature mapping and enhancement layers of the data is not comprehensive.
\end{enumerate*}
In our ensemble, only the output layer instances with the best prediction in the last chunk are kept in the model, and those with an accuracy less than threshold $\delta$ are replaced with a new one, or one of the output layer instances in the pool. The pool consists of removed output layer instances. Our approach to managing concept drift involves regularly swapping out the output layer instances that have been removed from the pool with the active ones in the ensemble. Using small chunk sizes results in more number of exchanges, and this helps our ensemble to have components trained with good variety of the data. Besides, it also reacts swiftly to any changes in the distribution of the data. Symbols and notations used in the ensemble learning process, and the detailed steps of our approach are shown in Table \ref{table:1} and Algorithm 3, respectively.
\par

\renewcommand{\arraystretch}{1.15}
\begin{table}
    \centering
    \caption{Additional Symbols and Notation for BELS (Algorithm 3) }
    \begin{tabular}{|c l |}
        \hline
        Symbol & Meaning  \\
        \hline 
       
        $\xi$& Ensemble of output layers
        $\xi  = \{{O}_1,{O}_2,{O}_2,..., {O}_l\}$\\
        \hline
        $P$ & The pool containing the removed output layers\\
        \hline

        $L$ & List of indexes of output layer instances with an \\& accuracy
        lower than threshold  (for each chunk)\\
       \hline
        ${s_{i}}^k$ &Relevance scores for the $i$th classifier for $k$th data \\&chunk in 
        
         the ensemble. ${s_{i}}^k = <{s_{i1}}^k,{s_{i2}}^k,{s_{i3}}^k,...>$\\
        \hline
        $bP$ & Output layer instances from $P$ which achieve an \\&accuracy   
        higher than a threshold in a chunk\\
        \hline
        $M_o$ & Maximum number of output layer instances \\
        \hline
        $M_p$ & Maximum number of output layer instances in $P$\\
        \hline
        $\delta$ & Accuracy threshold for removing an ensemble component \\

        \hline
    \end{tabular}
    \label{table:1}
\end{table}

Our model is defined with three independent but connected parts. Feature mapping layer denoted by $F$, enhancement layer denoted by $E$, and output layer denoted by $O$. BELS consists of a single $F$ and $E$, and an ensemble of $l$  output layer instances $\xi  = \{{O}_1,{O}_2,{O}_2,..., {O}_l\}$.
To update these parts at each time step, Algorithm 1 and Algorithm 2 are used.
 A set of new data instances $I = \{I_1,I_2,I_3,...,I_{S_c}\}$ where $2 \leq S_c \leq 50$ is used for each update.
 \par
Let us assume that we have received the first chunk of data and our model is initialized. In the next iterations, first, we check if the number of output layer instances has reached the predefined maximum number (Algorithm 3: 2-3). If $\xi$ is full, then the model removes the  output layers instances that have an  accuracy less than a threshold. If the number of output layer instances in $L$ is more than the instances in $\xi$, then the output layer instances in $L$ are added to $P$ (Algorithm 3:4-10). Next, the model adds the previously removed output layer instances from $bP$ back to the model (Algorithm 3:11-14). $bP$ Consists of output layer instances that were removed once and now are eligible to return to the learning process. Then we check the size of the pool, and if it passes the predefined threshold ($M_p$), we only keep the last $M_p$ output layer instances in the pool.

In the testing phase, the accuracy of each output layer instance is calculated and the index of the worst-performing ones are added to $L$ (Algorithm 3:20-30).   

 Hard voting is used for calculating the final prediction. In hard voting, the score-vector of an output layer instance $s^k$
is first transformed into a one-hot vector, and then combined with the $s^k$ of the other output layer instances to determine the final prediction (Algorithm 3:31).
Next, output layer instances in $P$ are tested. If any of them has an accuracy more than a threshold (denoted by $\eta$), then that output layer instance is added to $bP$ (Algorithm 3:32-36). This process gives the output layer instances in $P$ a chance to be added to $\xi$ and used in the learning process as of the next time step. Finally, the model is updated (Algorithm 3:38-40) and the same processes are repeated\footnote{We will make the implementation of BELS available on github after the review process.}.

\subsubsection{Time Complexity Analysis}
Since we assume that the calculations for building each feature mapping and enhancement node takes $O (1)$ time, let us assume that building the feature mapping layer and enhancement layer takes $O (n)$ and $O(m)$ time respectively, where $n$ is the number of feature mapping nodes, and $m$ is the number of enhancement nodes. Based on this assumption, we conclude that Algorithm 1 and first phase of testing (which includes generating of the feature mapping and enhancement) take $O(n+m)$ time. Next, we have our ensemble method in Algorithm 3. Execution time for initializing the model and removing the output layer instances, or adding them back to the model (Algorithm 3:2-18) 
 are negligible.

 Let us denote the ensemble size as $S_\xi$. Then, the final prediction for each output layer instance is calculated in (Algorithm 3:20-30), and it takes $O(S_\xi)$ time.

 Later, we test the output layer instances in the pool in (Algorithm 3:32-36). This process takes $O(P)$ time. Then for training, we first update the feature mapping and enhancement layer using Algorithm 1. This process is executed once for each chunk, and it takes $O(n+m)$ time. Finally, we update the output layer instances using Algorithm 2. In this algorithm, we calculate the pseudoinverse. Let us denote chunk size as $S_c$. Based on the dimensions of ${D_t}_k$ and ${A_t}_k$, we conclude that Algorithm 2 takes 
$O(max(S_c,(n+m))^2 min(S_c,(n+m)))$
 time.

\par
For $\frac{N}{S_c}$ chunks of data, the whole process complexity is as follows:
\begin{equation}
\label{time_complexity}
\begin{gathered}
O\bigg(\frac{N}{S_c}\Big(2(m+n)+S_\xi+P +
\big( S_\xi \, max(S_c,(n+m))^2\\ min(S_c,(n+m))\big) \Big)\bigg)
 \end{gathered}
\end{equation}
  
Obviously, among different parts of the algorithm, the one with $O( \xi \, max(S_c,(n+m))^2 min(S_c,(n+m)))$ time complexity dominates the whole process. The final complexity of the algorithm is as follows:
\begin{equation}
     O\Big(\frac{N}{S_c}\big(S_\xi \, max(S_c,(n+m))^2 min(S_c,(n+m))\big)\Big) 
   \label{time_complexity_2}
\end{equation}

\par

The analysis shows that our model's efficiency in terms of processing time is agnostic of the feature set size, and depends on the chunk size and the number of nodes. This ability allows the user of the model to set the hyperparameters based on the needs of the project in a real-world environment. Efficiency of the baseline methods heavily relies on the size of the feature set, and it significantly decreases as the feature set size increases. For instance, in two variations of the Waveform dataset (\textit{Waveform} and \textit{Waveform-Noisy}), we can observe that the runtime of baseline models is significantly higher in the dataset with a larger feature set size (\textit{Waveform-Noisy}); however, the runtime of BELS is not affected by the increase in the number of features (See Table \ref{tab:time_results}).
\par
\section{Experimental Design}
In this section, first the datasets and baseline methods are introduced, then we elaborate on our experimental setup.

 \renewcommand{\arraystretch}{1.12}
\begin{table}
\centering
\caption{Properties of 20 Datasets used in the experiments}
\begin{threeparttable}
\begin{tabular}{|l|r|r|r|r|r|}

\hline
Dataset &    Type &   DT & $\#$ F  &   $\#$ C &     Size  \\
\hline

Airlines & Re & U  & 7 & 2 &  539,383 \\
\hline
CoverTypes  & Re & U & 54 & 7 &  581,012 \\
\hline
Electricity\tablefootnote{
\url{https://github.com/ogozuacik/concept-drift-datasets-scikit-multiflow}}\cite{harries1999splice}  & Re &       U    & 8 & 2 &  45,312 \\
\hline

Email\textsuperscript{1}\cite{katakis2010tracking}  &  Re &      A\&R    & 913 & 2 &  1,500   \\
\hline
Phishing\textsuperscript{2}\cite{sethi2017reliable}   & Re &      U   & 46 & 2 &  11,055 \\
\hline

Poker \textsuperscript{2}\cite{Dua:2019}   & Re &       U    & 10 & 10 &  829,201  \\
\hline
Spam\textsuperscript{2}\cite{sethi2017reliable}   & Re &       U       & 499 & 2 &  6,213\\

\hline
Usenet\tablefootnote{
\url{http://mlkd.csd.auth.gr/concept_drift.html}\label{footnote_1}} \cite{katakis2008ensemble}
   & Re &A\&R & 99 & 2 &  1,500  \\
\hline
Weather\textsuperscript{2}\cite{elwell2011incremental}  & Re & U & 8 & 2 &  18,152 \\

\hline
Interchanging RBF\textsuperscript{2}\cite{losing2016knn}     & Syn &        A & 2& 15&  200,000  \\
\hline

LED-drift& Syn & A   & 7 & 10 &  100,000 \\
\hline
MG2C2D\tablefootnote{
\url{https://sites.google.com/site/nonstationaryarchive/datasets}\label{footnote_3}}\cite{dyer2013compose}& Syn &       I\&G  & 2 & 2 &  200,000 \\
\hline

Moving Squares\textsuperscript{2}\cite{losing2016knn}     & Syn &       I& 2 & 4 &  200,000 \\
\hline
Rotating Hyperplane\textsuperscript{2}\cite{losing2016knn}& Syn &       I  & 10 & 2 &  200,000 \\

\hline
SEA-Abrupt-01210\tnote{*}   & Syn & A\&R & 3 & 2 &  100,000 \\
\hline
SEA-Abrupt-13123\tnote{*}   & Syn & A\&R   &3 & 2 &  100,000\\
\hline
SEA-Incremental-01210\tnote{*}   & Syn & I\&R  & 3 & 2 &  100,000 \\
\hline
SEA-Incremental-13123\tnote{*}    & Syn & I\&R  & 3 & 2 &  100,000 \\
\hline
Waveform   & Syn & -  & 21 & 3 &  100,000 \\
\hline
Waveform-Noisy\tnote{*}    & Syn & -  & 40& 3 &  100,000 \\
\hline

\end{tabular}
\begin{tablenotes}
  \item[*] Noisy datasets are marked with (*).
  
\end{tablenotes}
\end{threeparttable}

\label{tbl:datasets}
\end{table}

\renewcommand{\arraystretch}{1.45}
\begin{table*}[t]
    \centering
    \caption{Baseline methods with a short description}
    \begin{threeparttable}
    \begin{tabular}{|l | l |}
        \hline
        Baseline Name & Short Description\tnote{*}  \\
        \hline 
       
        AddExp \cite{kolter2005using}& Additive Expert Ensemble utilizes a pruning strategy by removing the weakest components. \\
        \hline
        DWM \cite{kolter2007dynamic} & Dynamic Weighted Majority uses an ensemble of weighted classifiers. Weights are assigned based on the performance of classifieirs.\\
        \hline
        GOOWE \cite{bonab2018goowe} & Geometrically Optimum and Online-Weighted Ensemble uses geometrical approach for assigning weights for the classifiers. \\
       \hline
         HAT\cite{bifet2009adaptive} &Hoeffding Adaptive Tree Classifier uses an ADWIN concept drift detector for detecting the drift and pruning the tree.\\
        \hline
        
        KUE \cite{cano2020kappa} & Kappa Updated Ensemble combines online and chunk-based approaches using Kappa statistic to update the weights of classifiers. \\
        \hline
        LevBag \cite{bifet2010leveraging} & Leveraging Bagging Classifier is an active ensemble approach built on top of the Oza Bagging method. \\
        \hline
        OzaADWIN \cite{oza2001online} & Oza Bagging ADWIN Classifier utilizes a concept drift detector for enhancing the performance of Online Bagging Algorithm. \\
        \hline
        SAM-kNN \cite{losing2016knn}& The Self Adjusting Memory (SAM) is a kNN-based ensemble method for data stream classification.\\
        \hline
         SRP \cite{gomes2019streaming} & Streaming Random Patches is an ensemble method that utilizes both bagging and random subspaces.\\
        
        \hline
    \end{tabular}
    \begin{tablenotes}
    \item[*] Descriptions of each method is based on the information on Scikit-mutliflow \cite{JMLR:v19:18-251} web page or their original paper.
    \end{tablenotes}
    \end{threeparttable}
    \label{tab:baselines}
\end{table*}

\subsection{Datasets}
In our experiments, we use 20 datasets to evaluate the efficiency (in terms of runtime) and effectiveness of our model, and compare it with the baselines. Our datasets cover a wide spectrum of possibilities that can be observed in data streams. Nine Real (Re) and 11 Synthetic (Syn) datasets are used. Their properties are presented in Table \ref{tbl:datasets}. All four drift types are used in the experiments: Gradual (G), Incremental (I), Abrupt (A), and Recurring (R). In the same table, (U) stands for unknown drift type.
 For each dataset, the Drift Type, number of features, and number of class labels are denoted as (DT), (\# F), and (\# C); respectively.
 \par
 First four synthetic datasets in Table \ref{tbl:datasets} are from other works in the literature \cite{dyer2013compose,losing2016knn}, and the LED, SEA and Waveform datasets are generated using the scikit-multiflow library \cite{JMLR:v19:18-251}. In the LED dataset, seven drifting features are used without noise. For the SEA dataset, each number in the the dataset names represents one of the classification functions (There are four classification functions: $\{0,1,2,3\}$). At each drift point, a new classification function is used to generate the stream. We introduce five drift points in each dataset. SEA datasets are also noisy. Abrupt and Incremental drifts contain 10\% and 20\% noise, respectively. For adding noise to the datasets, the labels of the dataset change from 0 to 1 and vice versa, according to a probability distribution \cite{JMLR:v19:18-251}. The Waveform-Noisy dataset adds 19 irrelevant attributes as noise to the dataset \cite{JMLR:v19:18-251}.

\newcommand{\cmark}{\ding{51}}%
\newcommand{\xmark}{\ding{55}}%

\subsection{Experimental Setup}
The evaluation is based on the \textit{interleaved-test-then-train} approach \cite{gama2009issues}. It is the most common technique for evaluating classification models in a data stream environment. In this approach, a data instance or a chunk of instances are first used for testing, then for training the model.
\par
One of the most challenging tasks in data stream mining is optimizing hyperparameters for neural networks. Recently, a potential straightforward solution has been proposed, which involves using a pool of neural networks with varying hyperparameters \cite{gunasekara2022online}. During the learning process, all neural networks in the pool are trained, and after receiving each data instance, the model with the lowest loss (excluding the current data instance) is used for prediction; however, in our case, such approach may result in a significant computational burden. To address this issue, we utilize a modified version of this method that only considers the first 1,000 data items to choose a set of hyperparameters for the rest of the learning process (for Usenet and Email datasets we use the first 100 data items because of their small size). For each dataset, we use this subset of data to perform a local grid search to determine the optimal number of feature mapping nodes, number of enhancement nodes, and chunk size. We then choose the combination of hyperparameters with the highest accuracy, and continue using the same set of hyperparameters during the learning process. We limit the number of nodes and chunk sizes to a predefined set to reduce the time needed for the grid search. Therefore, three combinations of BELS are used as follows: $BELS_1$ ($n$: 25, $m$: 1), $BELS_2$ ($n$: 25, $m$: 50), $BELS_3$ ($n$: 100, $m$: 100), where, as defined earlier, $n$ is the number of feature mapping nodes, and $m$ is the number of enhancement nodes. For chunk size, we use a set of five different chunk sizes: $\{2,5,10,20,50\}$. 
\par
The other three hyperparameters are set to default when the model is initiated: $M_o$= $75$, $M_P$= $300$, $\eta$= $0.5$, and $\delta$ is updated dynamically in the model (See Algorithm 3:23-25). These hyperparameters are tuned based on a grid search on several values, and the ones that contribute positively to both runtime and accuracy are chosen as default.

\par

\par

\par

To compare the performance of BELS with other methods, we choose nine state-of-the-art models as baselines. The names of the baseline models and a short explanation about their approach is supplied in Table \ref{tab:baselines}. All the baselines (except GOOWE and KUE) are implemented using scikit-multiflow \cite{JMLR:v19:18-251} library. The source codes for GOOWE and KUE are available on their github\footnote{\url{https://github.com/abuyukcakir/goowe-python}} and website\footnote{\url{https://people.vcu.edu/~acano/KUE/}}. To have a fair comparison, we use default configuration for BELS and baselines. For the baseline methods, default values introduced in their papers are utilized in the experiments.

\par

The experiments are conducted on a PC with Intel(R) Xeon(R) Gold 5118 CPU @ 2.30 GHz and 128 GB RAM on an Ubuntu 18.04.4 LTS operating system.

\par
\section{Experimental Results and Evaluation}
In this section, we first compare our method with its variants and also with the original BLS. Then we perform a thorough experimental evaluation with various baselines in terms of prequential accuracy and runtime. Next, we discuss the results of statistical significance analysis. Finally, the results of hyperparameter sensitivity analysis are discussed.

\newcolumntype{M}[1]{>{\centering\arraybackslash}m{#1}}

\newcolumntype{P}[1]{>{\centering\arraybackslash}p{#1}}
\renewcommand{\arraystretch}{1.05}
\begin{table*}[t]
    \centering 
    \caption{{Comparison between BELS, its variants, and BLS in terms of average prequential accuracy and runtime (runtime is reported in centiseconds for processing of 1,000 data items). Accuracy improvement of BELS wrt. to BLS is provided in the last column.  Best results for each dataset is in bold }}

    \resizebox{\textwidth}{!}{
    \begin{tabular}{|l|r r |r r |r r| r r |r| }
         \hhline{~ --------~}
    \multicolumn{1}{l|}{}&
     \multicolumn{2}{c|}{ BLS}&
    
    \multicolumn{2}{c|}{ BELS-FPs}&
    
    \multicolumn{2}{c|}{ BELS-Ens}&
     \multicolumn{2}{c|}{ BELS}&
     \multicolumn{1}{c}{ }
    \\
    \hhline{----------}
        Datasets (DT) & Accuracy& Runtime  & Accuracy & Runtime & Accuracy& Runtime& Accuracy &Runtime &\pbox{50cm}{\% Acc. Imp.}  \\
        \hline
        Electricity (U) & 49.94 & \textbf{177.08}   & 60.54&180.79& 84.68 & 285.95& \textbf{87.21} & 597.06&  74.62\\
        
        Usenet (A \& R) & 54.76 & \textbf{13.29} & 59.71& 14.49 &\textbf{70.64} &16.58& 70.29 & 25.80&28.36\\

        MG2C2D (I \& G)& 61.37 & \textbf{17.67} & 91.06 & 22.81 &92.06 &  83.61&\textbf{92.94}& 301.65 &51.44 \\
        
        Rotating Hyperplane (I)& 81.54 & \textbf{6.42}  & 81.76 & 7.96&85.94 & 10.08&\textbf{90.79} & 54.66&11.34\\
        
         \hline
    \end{tabular}
    }
    \label{tab:ablation}
\end{table*}

\begin{figure*}[h]
    \centering
    \begin{tabular}{c c}
    \includegraphics[ scale = 0.46]{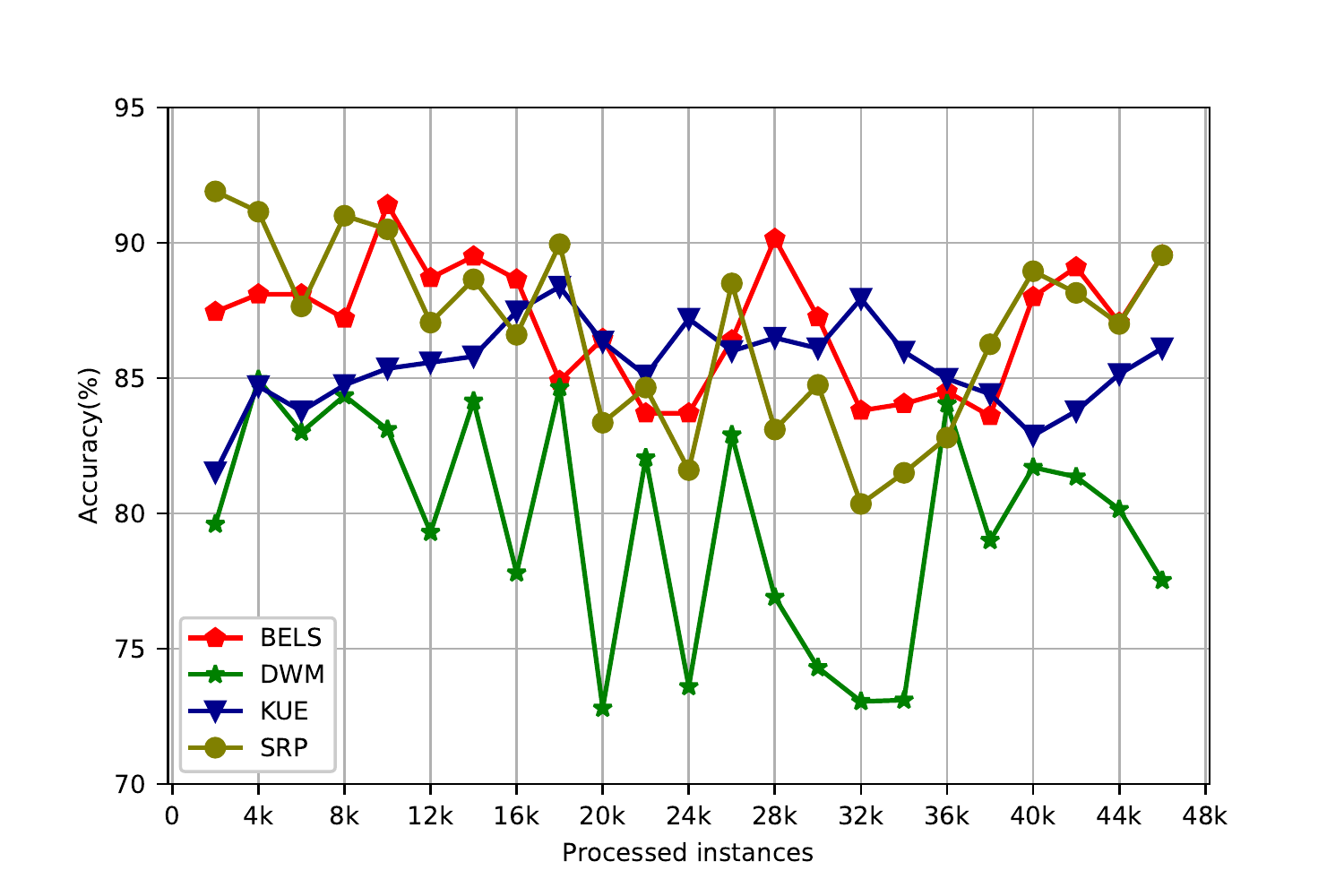} 
    
    &\includegraphics[ scale = 0.46]{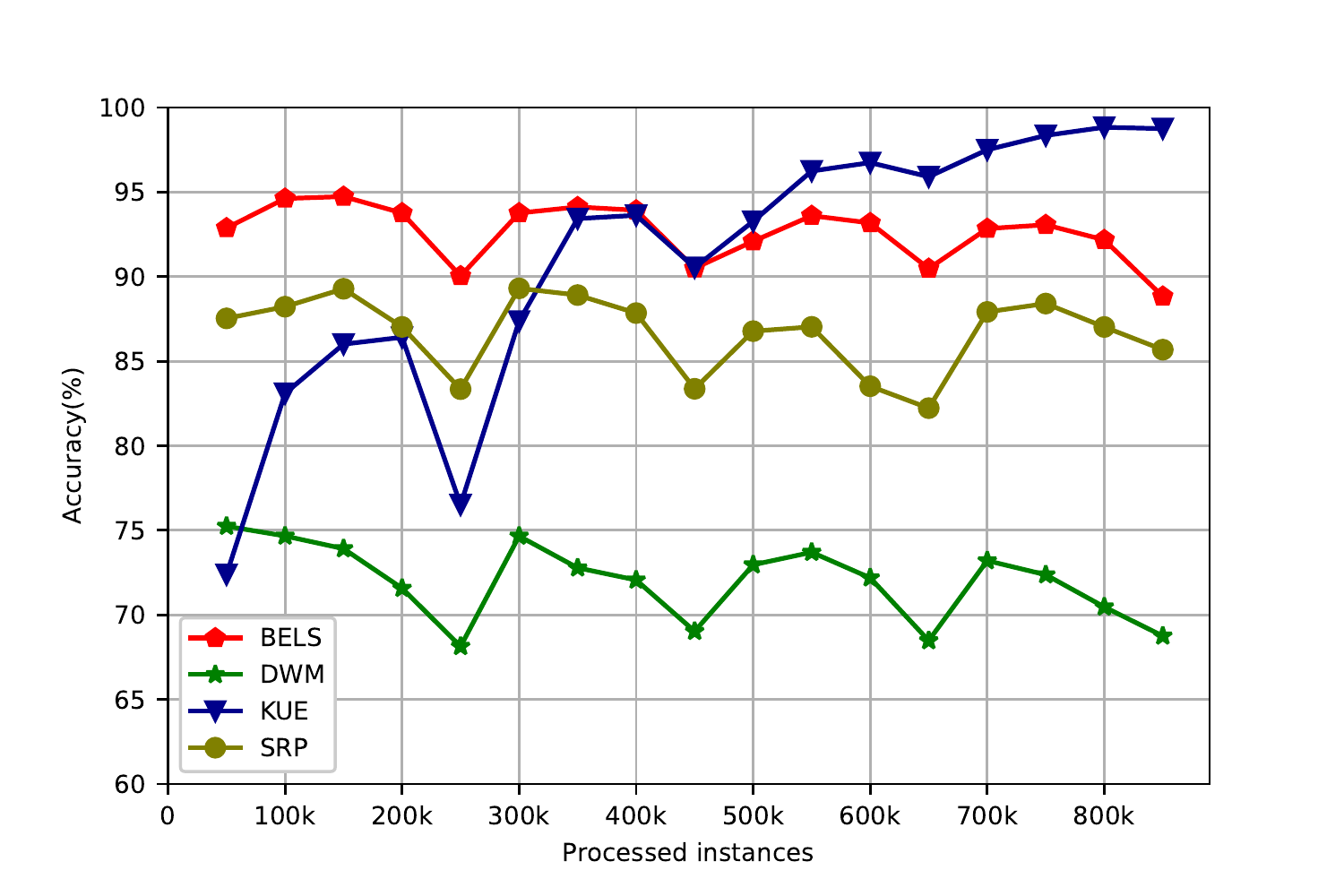}
    \\
    (a) Electricity (Unknown) & (b) Poker (Unknown)  \\

    \includegraphics[ scale = 0.46]{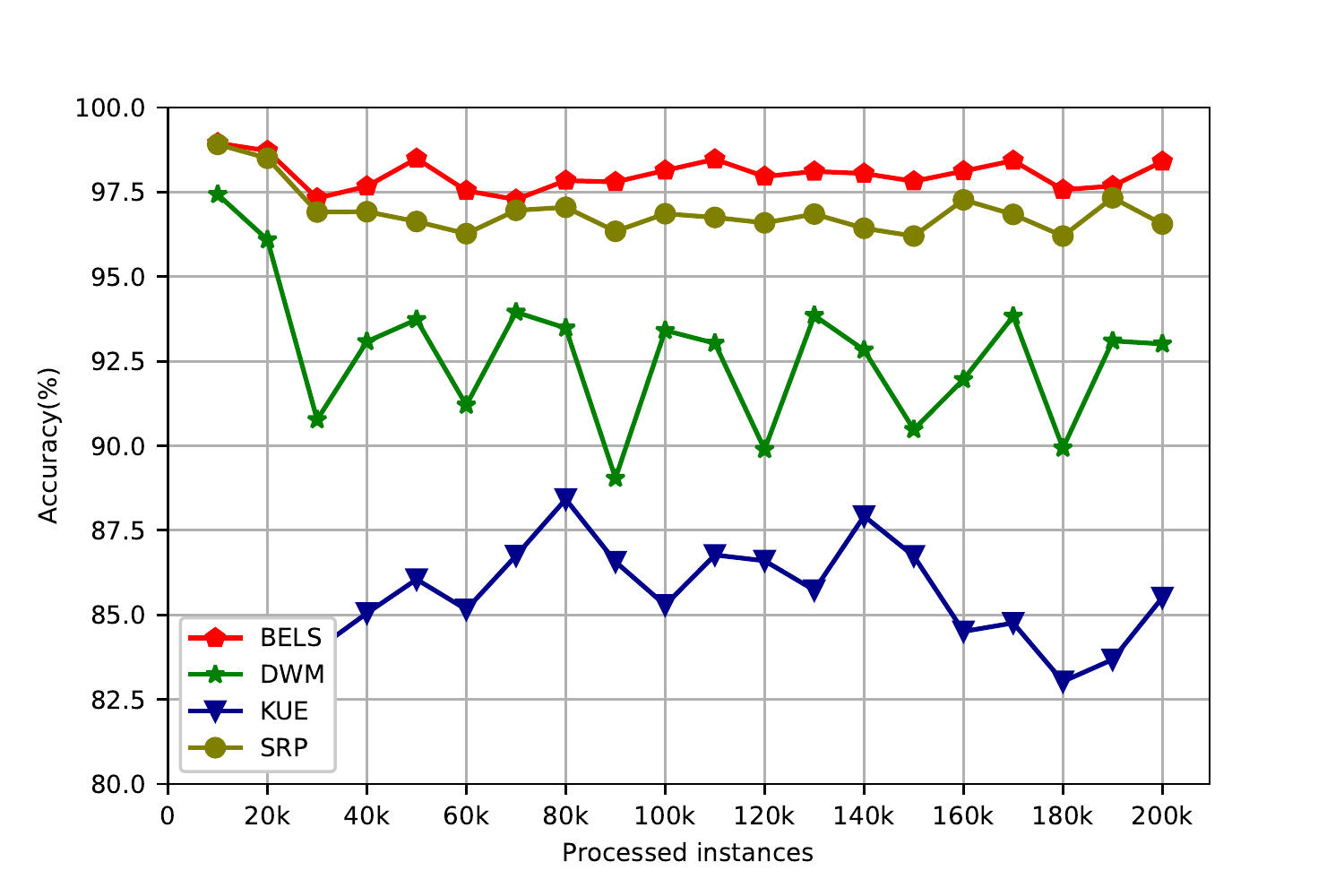} & \includegraphics[ scale = 0.46]{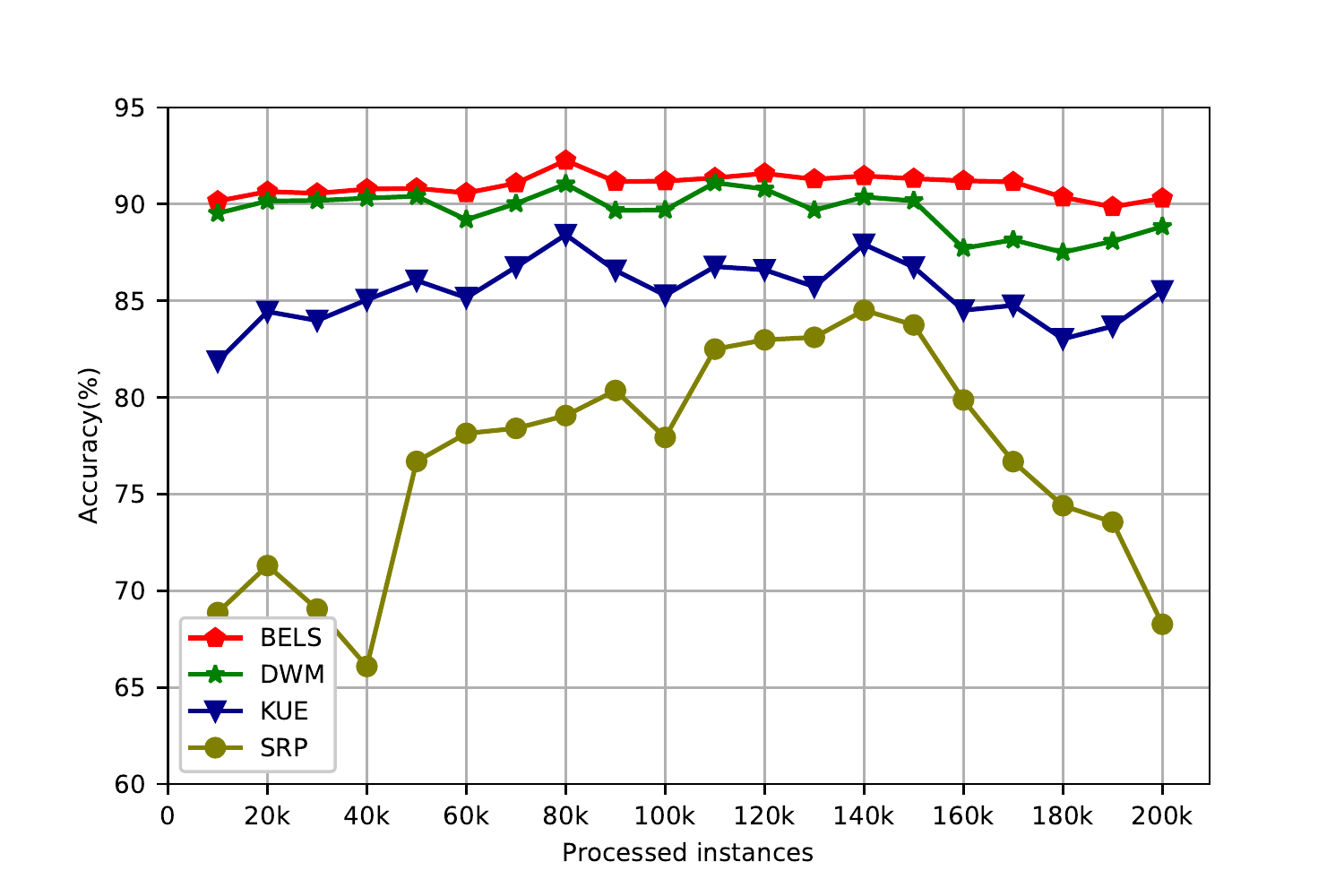} 

    \\
    (c) Interchanging RBF (Abrupt)  & (d) Rotating Hyperplane (Incremental)\\
\end{tabular}
    \caption{Prequential temporal accuracy results of BELS and three top ranked baselines (DWM, KUE, SRP) (drift types are given within parentheses).}
    \label{fig:my_label}
\end{figure*}

\subsection{BELS vs. BLS - Ablation Study}
To study the effect of our approach on the learning process and the handling of concept drift, we conduct a step-by-step study to show the improvement in average prequential accuracy and runtime. 
We analyze the effects of the different features of BELS on accuracy by performing the experiments on three different BELS variants, and compare it with the original BLS model in terms of runtime efficiency and effectiveness.

\begin{itemize}[leftmargin=*,align=left]
   \item{\textbf{BLS}}: The original Broad Learning System method.
    \item{\textbf{BELS-FPs}}: BELS method with the enhancements mentioned in Section \uppercase\expandafter{\romannumeral3\relax}.C.1 and \uppercase\expandafter{\romannumeral3\relax}.C.2, which includes feature mapping layer and pseudoinvese update.
    \item{\textbf{BELS-Ens}}: 
    BELS as an ensemble method. This part does not contain the pool of removed output layer instances.
    \item{\textbf{BELS}}: Complete BELS version with all of its features.
\end{itemize}

We report the results of four data streams of a good variety on four datasets.
 The chosen datasets contain all types of concept drift (incremental, gradual, abrupt, recurrence). To have a fair comparison, we use the same hyperparameters for all three variants of BELS and the BLS. The results are shown in Table \ref{tab:ablation}.
\par

BLS is not designed for a data stream environment. Based on this fact, we know that BLS is faster than our algorithm and at the same time, it has lower accuracy. As we see in Table \ref{tab:ablation}, by utilizing each feature, the accuracy improves, which yields the best performance in the complete version of BELS with an average accuracy improvement of 41.44\% on the four datasets used in Table \ref{tab:ablation}. Accuracy improvement is calculated as $\frac{\text{Accuracy Increase}}{\text{Original Accuracy}}\times 100$. Average accuracy improvement is the mean of improvement over all datasets.

\par

\renewcommand{\arraystretch}{0.93}
\begin{table*}[t]
    \centering
    \caption{Average Prequential Accuracy Results (in \%). For each row the highest value is marked with a bold text. Avg. \% Imp. by BELS wrt a baseline is the mean value of \% improvements obtained for individual datasets. Average Rank is the mean of rank of each method for individual datasets}
    \resizebox{\textwidth}{!}
    {
        \begin{tabular}{l r r r r r r r r r r}
            \hline 
            Datasets & BELS & AddExp & DWM &GOOWE& HAT&KUE& LevBag & OzaADWIN & SAM-kNN & SRP  \\
            
            \hline
            Airlines & 61.85 &58.44  &61.56 & 64.35 & 64.33 &\textbf{66.68}& 56.23 & 59.69   &58.26 &64.63 \\
            
            CoverType & 87.29  & 57.61 &80.38& 84.62 & 83.04& 87.32& \textbf{95.32}& 93.15  &93.20 &93.73\\
    
            Electricity & \textbf{87.21} & 73.96 & 79.79& 76.28& 83.27 &76.89&  83.49 & 76.15 &82.47 &86.64  \\
            
             Email & \textbf{68.36} & 55.26 & 56.48& 57.81& 54.48 & 52.70& 57.48 & 54.76 &63.04 &58.12  \\
             
            Phishing & 
            \textbf{93.74} 
            & 91.60 & 92.04& 90.58& 89.78 & 92.06&90.73 &92.30  &92.31 &92.43 \\
            
            Poker & \textbf{91.71} &58.98 & 72.10& 61.41& 66.64 &90.67&  81.44 & 82.23   &79.72 &86.70 \\
            
            Spam & \textbf{95.30} &90.31 & 89.41 & 86.41& 86.82  &82.36& 94.00 & 90.16 &91.78 &92.49 \\
            
            Usenet & \textbf{70.29} & 64.47 & 63.91 & 62.96& 65.47 & 66.30& 52.98& 57.12 &65.05&67.90  \\
            
            Weather & 77.36 & 69.10 & 70.07&71.52& 73.78 &74.95& 73.38 & \textbf{78.22}  &75.20&75.52\\
    
            Interchanging RBF & \textbf{98.20} &  17.02 & 92.70&69.15& 61.78 &69.30& 96.39 & 94.14  &95.81 &96.92 \\
             
             LED-drift& \textbf{100.00}& \textbf{100.00}& \textbf{100.00} & \textbf{100.00}& 99.97& 
             \textbf{100.00}&\textbf{100.00}&\textbf{100.00} & \textbf{100.00}&91.56 \\
            MG2C2D& 92.94&55.34 &91.94& 90.85& 92.70 &\textbf{92.95}&  90.47 & 92.55  & 91.93 &92.53\\
            
            Moving Squares & 89.99 & 32.66& 70.57& 34.15& 74.74 &30.27&60.73 & 43.01 &\textbf{97.37} &79.18  \\
            
            Rotating Hyperplane& \textbf{90.85} &81.88 & 89.64& 87.77& 85.77 & 85.44&73.44 & 82.21 &79.30 &76.82  \\

             SEA-Abrupt-01210& \textbf{88.65}& 86.94& 88.37 & 87.72& 87.56&  88.21&80.44 &87.54 &85.11 &86.11\\
             
             SEA-Abrupt-13123& \textbf{88.68}& 86.21 &88.29 & 87.83 & 87.89 & 87.77&80.00 & 87.23 &84.73 &85.31 \\
             
             SEA-Incremental-01210& \textbf{78.53}& 78.26& 78.38 & 78.24&78.17& 78.28&67.15& 75.37&72.71 &75.28 \\
             
             SEA-Incremental-13123& \textbf{78.53}& 77.79& 78.18 & 78.28& 77.85&  78.17&67.03&75.33 & 72.29&72.55\\

             Waveform& \textbf{84.96}& 80.08& 79.62 & 82.84 & 79.83& 83.67 &76.24&82.08 & 80.29&80.11\\
                    
             Waveform-Noisy& \textbf{85.77}& 80.70& 79.60 & 83.51&80.76& 83.26&73.80&80.94 & 78.85&78.43 \\
             
            \hline
            Avg. Accuracy&\textbf{85.51}&69.83	&80.15&76.81&	78.73&78.36 &77.54&	79.21&	81.97&	81.65	\\

            \hline
            Avg. \% Imp. by BELS& - & 46.66  & 7.36& 16.70& 9.86&  15.80 &11.98& 11.14  &4.80 & 5.05 \\
           \hline

            Avg. Rank& \textbf{1.78}& 7.53&5.58& 5.83& 6.10 & 4.78&7.18 & 5.63 & 5.73 & 4.90\\
            \hline
            
        \end{tabular}
    }
    \label{tab:acc_results}
    
\end{table*}

\renewcommand{\arraystretch}{1}
\begin{table*}[h]
    \centering
    \caption{{Average runtime for processing 1,000 data instances (in centiseconds). For each row the lowest value is marked with a bold text}}
    \resizebox{\textwidth}{!}{
    \begin{tabular}{l r r r r r r r r r r}
        \hline 
        Datasets & BELS & AddExp & DWM&GOOWE& HAT & KUE&  LevBag & OzaADWIN  & SAM-kNN & SRP \\
        \hline

        Airlines & 119.94 &159.76& 163.79 & 513.67 & 98.86 & 52.20&  651.25 & 561.39 & \textbf{17.63} & 432.46 \\
        
        CoverType & 716.79 &2,218.52 & 1,282.03 & 3,365.33 & 468.39 & 429.92&3,151.58 & 4,166.71 & \textbf{19.20} & 1,145.10 \\

        Electricity & 597.06 &143.74 & 136.94 & 282.60 & 66.50  & 26.70& 931.21 & 734.62 & \textbf{17.39} & 456.04\\

        Email &120.20 &6,694.00 & 5,557.33 & 2,043.33 & 2,285.33 & 462.67& 55,938.00 & 15,354.00 & \textbf{76.00} & 21,680.67 \\

        Phishing & 
        803.80&763.73 & 719.49 & 1,107.55 & 244.78 & \textbf{68.29}& 3,396.56 & 2,608.86& 130.80 & 1,004.07\\
        
        Poker &1,554.27 &629.47 & 316.07 & 835.01 & 146.32& 122.20& 957.41& 2,637.76 & \textbf{20.80} & 617.47 \\

        Spam & 7,004.02 &7,654.91 & 7,151.62 & 6,826.65& 2,794.46 & 516.01& 28,925.48 & 9,331.40 & \textbf{63.09} & 24,224.07\\
            
        Usenet & \textbf{25.80 }&762.67& 523.33 & 210.00 & 302.00 & 89.33&  5,168.67 & 1,290.67& 28.00 & 3,589.33 \\

        Weather & 65.17 &283.05 & 143.57 & 265.65& 71.73& \textbf{13.17}&  636.90 & 548.75 & 19.94 & 994.27\\

        Interchanging RBF & 321.03 & 197.01 & 79.18 & 477.24 & 76.46  & \textbf{12.86}& 356.70 & 1,205.40 & 322.04 & 1,524.79 \\
        
        LED-drift& 269.88&236.60 & 2,047.38 & 496.53 & 782.37 & \textbf{67.98}&3,924.38 & 19,355.88 & 1,348.22 & 3,475.63 \\
        
        MG2C2D&301.65 & 89.17 & 79.39 & 152.54 & 331.14 &\textbf{ 5.50}& 50.54 & 340.00 & 35.80 & 487.41 \\
        
        Moving Squares & 509.15 & 112.70 & 83.69 & 211.29 & 56.83  & \textbf{4.54}& 314.83 & 640.57 & 16.87 & 526.93 \\
        
        Rotating Hyperplane& 54.66 & 154.20 & 148.98 & 397.73 & 759.36 & \textbf{23.27}&  95.28 & 719.14 & 100.86 & 1,075.10 \\

        SEA-Abrupt-01210& 119.91 &76.56 & 75.89 & 273.81 & 58.40 & \textbf{10.64}&  465.92 & 417.18 & 31.12 & 307.37  \\

         SEA-Abrupt-13123& 126.51 &77.78 & 76.61 & 272.27 & 55.90 & \textbf{13.64}& 456.52 & 412.97 & 25.98 & 319.12 \\
         
        SEA-Incremental-01210& 112.99&77.62 & 77.06 & 272.49 & 59.73 & \textbf{11.39}& 467.09 & 419.04 & 20.54 & 375.14 \\
        
        SEA-Incremental-13123& 115.87&76.02 & 73.69 & 273.29 & 53.14  & \textbf{12.18}& 468.48 & 421.03 & 40.09 & 29.72 \\

         Waveform& 235.46& 232.37& 214.06 & 907.32& 112.70& 46.34& 1,399.12&2,028.46 & \textbf{39.96}&728.85\\
         
         Waveform-Noisy& 244.34& 410.31& 391.26 & 1,650.97& 181.50& 80.37& 2,432.25&3,494.95 & \textbf{64.42}&1,695.35\\

        \hline
        Avg. Runtime& 670.93&1,052.51&967.07&1,041.77&450.30&\textbf{103.46}&5,509.41&3,334.41&126.99&3,234.44\\
        \hline
        Avg. Rank&
        5.30&5.45&4.95& 6.10& 4.30& \textbf{2.20}& 7.90&8.15&3.30& 7.35 \\
        \hline
    \end{tabular}
    }
    \label{tab:time_results}
\end{table*}
\subsection{Effectiveness and Efficiency Analysis}

The prequential evaluation results are shown in Table \ref{tab:acc_results}. They demonstrate that our proposed method outperforms the baseline methods on 15 out of 20 datasets, and has a marginal difference with the best performing baseline in other five datasets. Our observations on datasets of varying drift types with different numbers of class labels and features demonstrate the adaptability of our model under different classification conditions. 
On average BELS improves the accuracy by 13.28\% compared to the baselines used in the experiments across 20 datasets.
Experiments on noisy datasets also show that our model is robust to noise.  
\par
We can see the runtime of the models in Table \ref{tab:time_results}. Runtime is considered as the processing time for test and train of 
 1,000 data instances in the interleaved-test-then-train method. The results show that our model has an average processing time of 670.93 centiseconds for processing 1,000 data items.

 \par

\par
The plots in Figure \ref{fig:my_label} show that BELS has robust concept drift-resistant performance: it maintains much better performance when concept drift occurs (indicated by the fluctuated accuracies as the data stream progresses). Furthermore, it provides a higher level of accuracy compared to the three top ranked baselines (KUE, SRP, DWM); to instantiate, in two synthetic datasets (Interchanging RBF and Moving Squares), it is always the top-performing model during the entire process. 
In the Electricity dataset, we observe abrupt fluctuations in the prequential accuracy. This suggests that there might be an abrupt drift in this dataset. In the Poker dataset, there are mild abrupt changes in the accuracies; however, in case of BELS, we see soft transitions in the plot that demonstrate the ability of BELS to handle concept drifts in such cases. The drifts in the other two datasets (Interchanging RBF and Rotating Hyperplane) are abrupt and incremental, respectively. In the Interchanging RBF dataset,  unlike BELS and SRP, the other two baselines (KUE and DWM) experience sharp fluctuations in prequential accuracy. In the Rotating Hyperplane however, the transitions of the accuracy plot are soft in all of the methods. These plots are typical, and we observe similar results for other datasets.
\par
To gain a better perspective of the average rankings of the methods used in our experimental evaluation, we demonstrate them in Figure \ref{fig:bar_chart}. 
 Each bar shows the average ranking of the models in terms of accuracy and runtime. 

\begin{figure}[H]
\centering

   \includegraphics[width=0.75\columnwidth]{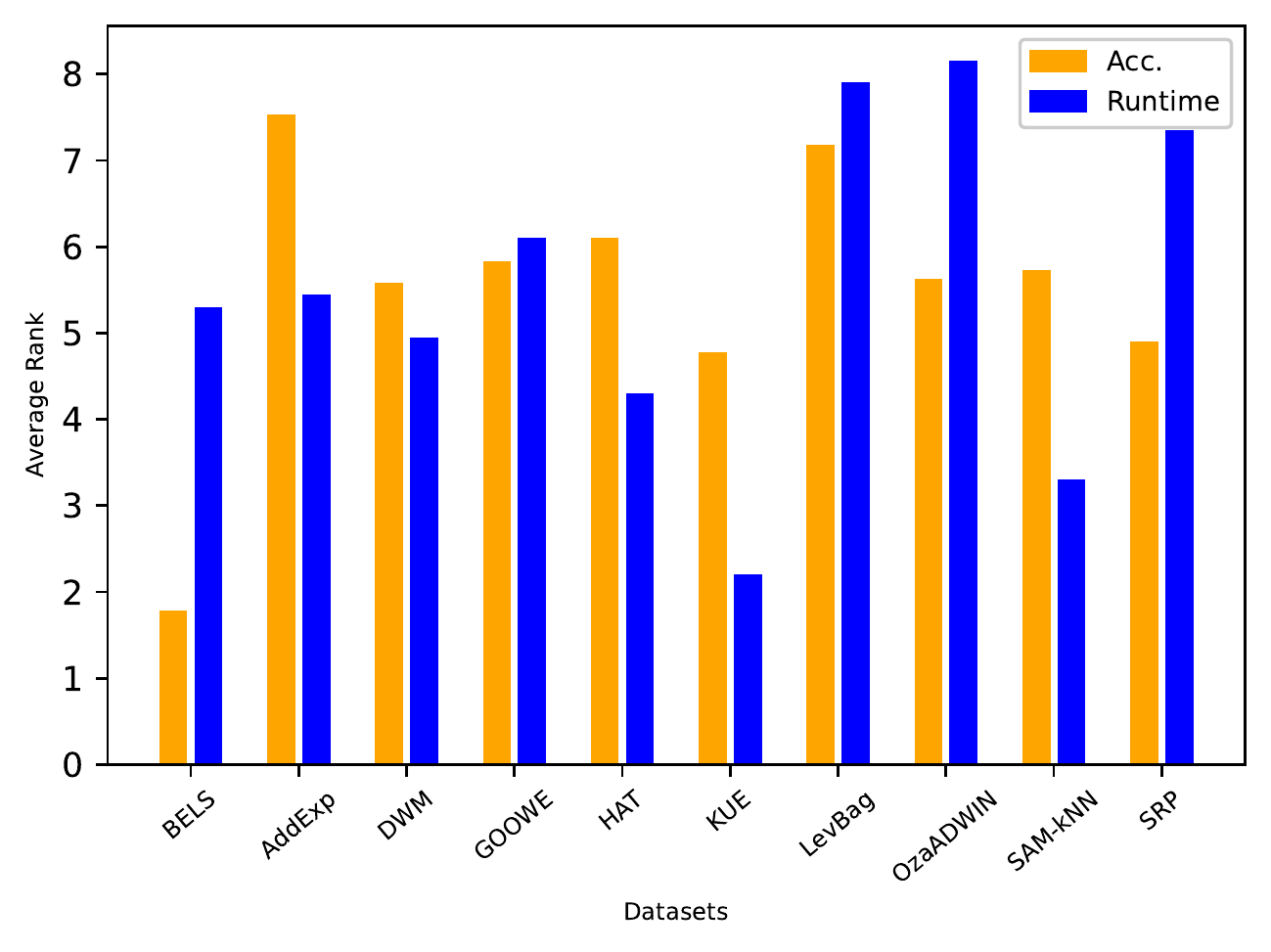}

\caption{Average rank comparison (lower is better) for prequential accuracy and runtime based on Tables \ref{tab:acc_results} and \ref{tab:time_results}.}
\label{fig:bar_chart}
\end{figure}

\subsection{Statistical Significance Analysis}
 In this part, we present a statistical test on both accuracy and runtime.
The analysis is conducted for 10 models and 20 datasets. By using the \textit{Friedman Test} we first reject the null hypothesis that there is no statistically significant difference between the mean values of the populations. Then we use \textit{post-hoc
Bonferroni-Dunn} test to see if there is a significant difference between the results of our proposed model and the baselines \cite{demvsar2006statistical}.
\par
For this test, we first rank the models based on their performance. Then based on the post-hoc Bonferroni-Dunn test, we calculate the \textit{Critical Difference} as $CD$= $2.65$. In our experiment $\alpha$$=$ $0.05$.
Figure \ref{fig:stat} shows the critical distance diagram for average prequential accuracy and runtime\footnote{https://orangedatamining.com}. 
\par
The statistical test results on average prequential accuracy shows that our model statistically significantly outperforms the baselines. In terms of runtime, KUE has the lowest rank, but our model is successful in improving its accuracy by 15.80\%.
 Furthermore, unlike the baselines, our model's runtime is not affected by the size of the feature set, and has a competitive runtime for stream mining. 
\begin{figure}[h] 
    \centering
    \fbox{

        \begin{minipage}{1\columnwidth}
            \centering\includegraphics[width=0.9\columnwidth]{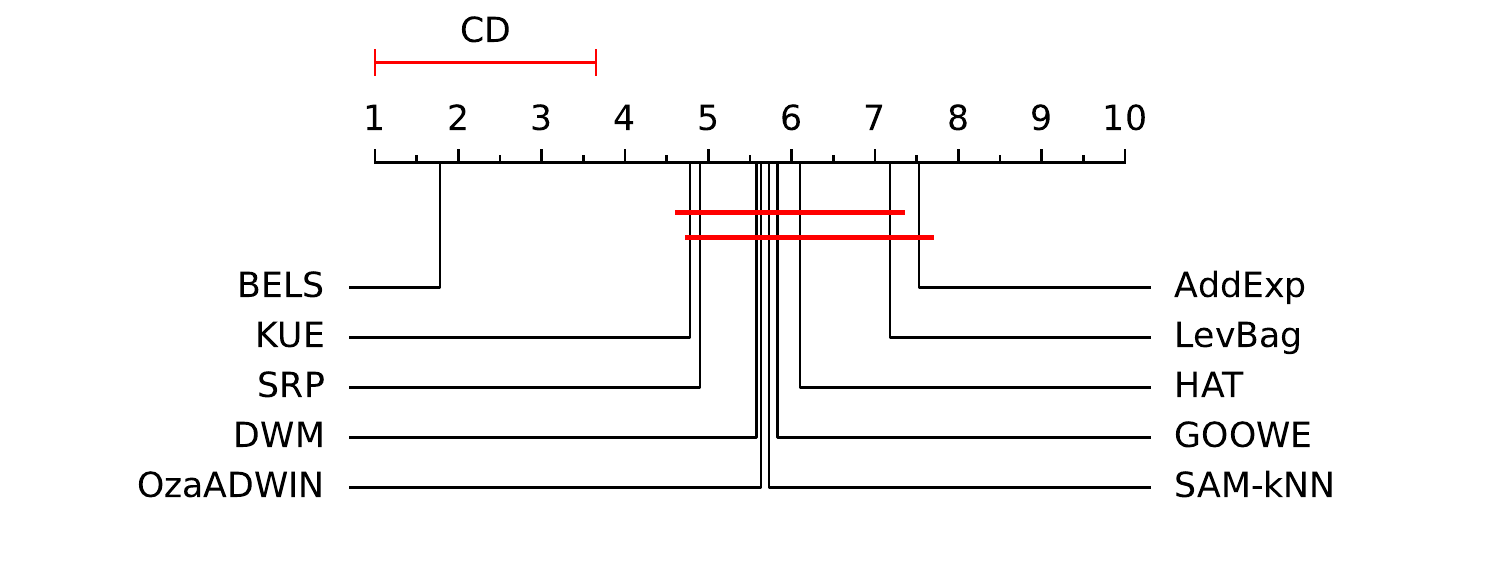}\\
            (a) Average prequential accuracy
        \end{minipage}
        }

         \fbox{
        \begin{minipage}{1\columnwidth}
            \centering\includegraphics[width=0.9\columnwidth]{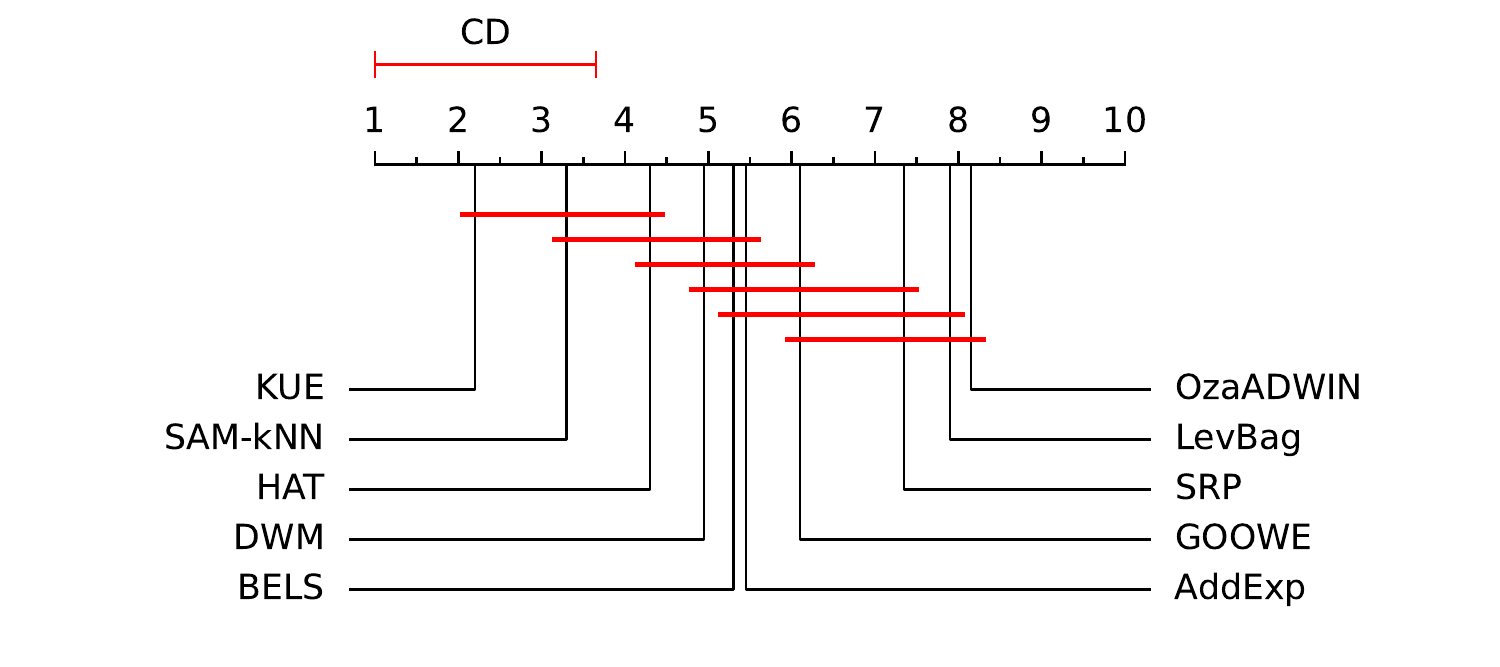}\\
            (b) Runtime
        \end{minipage}
        
    }
    \caption{Critical distance diagrams for the (a) Average prequential accuracy and (b) runtime using the data on Tables \ref{tab:acc_results} and \ref{tab:time_results}. (CD=2.65)}
    \label{fig:stat}
\end{figure}    

\newcolumntype{M}[1]{>{\centering\arraybackslash}m{#1}}

\newcolumntype{P}[1]{>{\centering\arraybackslash}p{#1}}

\renewcommand{\arraystretch}{1.3}
\begin{table*}[t]
    \centering
    \caption{{Hyperparameter sensitivity analysis. Results are reported in average prequential accuracy and runtime (in centiseconds for processing 1,000 data items). Best results for each hyperparameter for each dataset is in bold }}

    \resizebox{\textwidth}{!}{
    \begin{tabular}{|l l|r r r r r r r r r| r r r r r | r r r r r|}
         
    \hhline{~~-------------------}
    \multicolumn{2}{l|}{}&
 \multicolumn{9}{c|}{ $S_c$: chunk size}&
    
    \multicolumn{5}{c|}{ $M_o$: maximum number of output layer instance}&
    
    \multicolumn{5}{l|}{ $M_p$: maximim number of output layer instances in P}\\
        \hline
        Datasets (DT)& & 1&2 & 5 &10 & 20 & 50 & 100&250 &500& 5& 25 &50  &75 &150 &50 &100 &200 &300 &400 \\
        \hline

        Electricity (U)&Acc. & \textbf{88.18}&87.21&83.22&80.16&75.97&71.90&69.30&72.13&71.22&85.27&86.67&86.92&\textbf{87.21}&87.20&86.85&87.20&\textbf{87.21}&\textbf{87.21}&\textbf{87.21}\\
        &Runtime& 1,195.20 &597.06& 326.18&166.44&95.71&46.72&28.44&26.35&\textbf{11.78}&\textbf{199.95}&322.89&460.49&597.06&482.74 &\textbf{448.57}&584.87&585.32&597.06&522.78\\
        \hline
        Usenet (A \& R)&  Acc.& 68.79&\textbf{76.50}&72.07&71.14&70.50&70.29&63.536& -& -&65.35&\textbf{70.29}&\textbf{70.29}&\textbf{70.29}&\textbf{70.29}&\textbf{70.29}&\textbf{70.29}&\textbf{70.29}&\textbf{70.29}&\textbf{70.29}\\
        &Ruuntime& 1,304.66&536.66&293.33&112.00&48.66&21.33&\textbf{14.53}& -& - & \textbf{19.86}&21.52&20.36&\textbf{20.36}&\textbf{20.36}&\textbf{20.36}&\textbf{20.36}&\textbf{20.36}&\textbf{20.36}&\textbf{20.36}\\

        \hline
         MG2C2D (I \& G)& Acc. &81.19&91.42&92.45&92.71&92.84&\textbf{92.94}&92.67&92.50&92.76&92.35&92.75&92.82&\textbf{92.94}&\textbf{92.94}&92.87&\textbf{92.99}&92.94&92.94&92.94\\
          &Runtime&  9,470.50&5,967.25&2,850.06&1,348.71& 637.62&301.65&175.88&89.37&\textbf{55.86}&\textbf{50.24}&176.72&354.56&301.65&553.14&325.52&481.91&384.51&\textbf{301.65}&587.40\\
         \hline
        Rotating Hyperplane (I) &Acc.&  81.99&86.19&89.82&90.81&\textbf{91.00}&90.85&90.34&90.34&88.49&88.36&90.88&\textbf{90.94}&90.85&90.36&\textbf{91.05}&90.85&90.85&90.85&90.85\\
       & Runtime& 1,116.64&567.79&366.41&241.53&122.31&54.66&36.44&34.82&\textbf{24.91}&\textbf{12.06}&23.83&40.12&54.66&104.36&\textbf{39.73}&54.47&54.58&54.66&54.20\\
       
         \hline
    \end{tabular}
    }
    \label{tab:para_sens}
\end{table*}

\subsection{Hyperparameter Sensitivity Analysis}
In this part, we study the effects of the main hyperparameters on the overall performance of BELS. We use the same four datasets as in Section \uppercase\expandafter{\romannumeral5\relax}.A. With this intent, $S_c$ (chunk size), $M_p$ (maximum number of output layer instances in P), and $M_o$ (maximum number of output layer instances) are studied because of their importance in the learning process and handling the drifts. Table \ref{tab:para_sens} shows the results for hyperparameter sensitivity analysis. We first set a default setting for each dataset, based on the first 1,000 data instances (check Section \uppercase\expandafter{\romannumeral4\relax}.B for details). We only modify our target hyperparameter, and others stay the same in all of the experiments. Note that for the Usenet dataset, we do not report the results for chunk sizes 250 and 500, since the dataset has only 1,500 data items.
\par Based on the results for chunk size, we observe that the best accuracy results for each dataset is for the chunk sizes less than 100. BELS is designed for learning with small chunk sizes at every step, since it reacts to the changes in the accuracy, and frequently replaces the ensemble components with the new ones or the ones in the pool. Based on the observations in this section, we suggest using chunk sizes smaller than 100.  As mentioned earlier in Section \uppercase\expandafter{\romannumeral4\relax}.B, we use the first 1,000 instances to automatically determine the chunk size which is chosen from a set of $\{2,5,10,20,50\}$. The reason for removing chunk size equal to one from this set, is its inefficiency. As we can see in Table \ref{tab:para_sens}, runtime of the model with chunk size one, almost doubles the runtime of the model when chunk size two is used.
\par 
Based on the results in Section \uppercase\expandafter{\romannumeral5\relax}.A, we observe that having an ensemble of output layer instances improves the results, so the ensemble size should be greater than one. In our experiments, the default number of output layer instances ($M_o$) is set to 75. The reason is that adding more instances leads to longer runtime, and fewer output layer instances may result in poor performance in terms of accuracy.
\par
We also analyze the effect of pool size ($M_p$). After increasing  $M_p$ to 200 and more, the accuracy remains the same. The reason is that based on our ensemble method, after adding an output layer instance back to the learning process, we remove it from $P$ (see Algorithm 3:13). For this reason, usually $M_p$ does not exceed the defined limit for $P$; however, as we illustrated in Section \uppercase\expandafter{\romannumeral5\relax}.A, adding the pool has a positive impact on the performance of the model. We set the $M_p$ limit to 300 in all of the experiments. 
\par
Our suggestions for default hyperparameters is presented in Section \uppercase\expandafter{\romannumeral4\relax}.B, and they are repeated here for ease of reference: The chunk size is selected from a set of $\{2,5,10,20,50\}$, $M_o$$=$ $75$, and $M_P$$=$ $300$. 
\renewcommand\citepunct{, }

\section{Conclusion and Future Work}
In this work, we present BELS, a novel ensemble model for data stream classification in non-stationary environments. We describe real-world and unique challenges that data stream causes and focus on handling the concept drift using a novel approach. BELS tracks changes in the accuracy of the ensemble components and frequently reacts to these changes by exchanging classifier components between the pool of removed ensemble components and the active ensemble members. 
The results show that on average, BELS improves the accuracy, compared to other state-of-the-art models designed specifically for data streams, by 13.28\%, and in terms of processing time, it is a suitable choice for evolving environments.
\par
Another problem in data stream environments is lack of available class labels in real world scenarios.
As a future work,
we plan to improve BELS for semi-supervised data stream classification.

\ifCLASSOPTIONcaptionsoff
  \newpage
\fi

%
\bibliographystyle{plainurl}
\bibliography{sample-base}

\begin{thebibliography}{10}

\bibitem{baena2006early}
Manuel Baena-Garc{\i}a, Jos{\'e} del Campo-{\'A}vila, Ra{\'u}l Fidalgo, Albert
  Bifet, R~Gavalda, and R~Morales-Bueno.
\newblock Early drift detection method.
\newblock In {\em Fourth International Workshop on Knowledge Discovery from
  Data Streams}, volume~6, pages 77--86, 2006.

\bibitem{bifet2007learning}
Albert Bifet and Ricard Gavalda.
\newblock Learning from time-changing data with adaptive windowing.
\newblock In {\em Proceedings of the 2007 SIAM International Conference on Data
  Mining}, pages 443--448. SIAM, 2007.

\bibitem{bifet2009adaptive}
Albert Bifet and Ricard Gavalda.
\newblock Adaptive learning from evolving data streams.
\newblock In {\em International Symposium on Intelligent Data Analysis}, pages
  249--260. Springer, 2009.

\bibitem{bifet2010leveraging}
Albert Bifet, Geoff Holmes, and Bernhard Pfahringer.
\newblock Leveraging bagging for evolving data streams.
\newblock In {\em Joint European Conference on Machine Learning and Knowledge
  Discovery in Databases}, pages 135--150. Springer, 2010.

\bibitem{bonab2019less}
Hamed Bonab and Fazli Can.
\newblock Less is more: A comprehensive framework for the number of components
  of ensemble classifiers.
\newblock {\em IEEE Transactions on Neural Networks and Learning Systems},
  30(9):2735--2745, 2019.

\bibitem{bonab2018goowe}
Hamed~R Bonab and Fazli Can.
\newblock {GOOWE}: Geometrically optimum and online-weighted ensemble
  classifier for evolving data streams.
\newblock {\em ACM Transactions on Knowledge Discovery from Data (TKDD)},
  12(2):1--33, 2018.

\bibitem{brzezinski2011accuracy}
Dariusz Brzezi{\'n}ski and Jerzy Stefanowski.
\newblock Accuracy updated ensemble for data streams with concept drift.
\newblock In {\em International Conference on Hybrid Artificial Intelligence
  Systems}, pages 155--163. Springer, 2011.

\bibitem{cano2020kappa}
Alberto Cano and Bartosz Krawczyk.
\newblock Kappa updated ensemble for drifting data stream mining.
\newblock {\em Machine Learning}, 109(1):175--218, 2020.

\bibitem{chen2017broad}
CL~Philip Chen and Zhulin Liu.
\newblock Broad learning system: An effective and efficient incremental
  learning system without the need for deep architecture.
\newblock {\em IEEE Transactions on Neural Networks and Learning Systems},
  29(1):10--24, 2017.

\bibitem{chen2018universal}
CL~Philip Chen, Zhulin Liu, and Shuang Feng.
\newblock Universal approximation capability of broad learning system and its
  structural variations.
\newblock {\em IEEE Transactions on Neural Networks and Learning Systems},
  30(4):1191--1204, 2018.

\bibitem{demvsar2006statistical}
Janez Dem{\v{s}}ar.
\newblock Statistical comparisons of classifiers over multiple data sets.
\newblock {\em The Journal of Machine {L}earning {R}esearch}, 7:1--30, 2006.

\bibitem{ditzler2012incremental}
Gregory Ditzler and Robi Polikar.
\newblock Incremental learning of concept drift from streaming imbalanced data.
\newblock {\em IEEE {T}ransactions on {K}nowledge and {D}ata {E}ngineering},
  25(10):2283--2301, 2012.

\bibitem{ditzler2015learning}
Gregory Ditzler, Manuel Roveri, Cesare Alippi, and Robi Polikar.
\newblock Learning in nonstationary environments: A survey.
\newblock {\em IEEE Computational Intelligence Magazine}, 10(4):12--25, 2015.

\bibitem{Dua:2019}
Dheeru Dua and Casey Graff.
\newblock {UCI} machine learning repository, 2017.
\newblock URL: \url{http://archive.ics.uci.edu/ml}.

\bibitem{dyer2013compose}
Karl~B Dyer, Robert Capo, and Robi Polikar.
\newblock Compose: A semisupervised learning framework for initially labeled
  nonstationary streaming data.
\newblock {\em IEEE Transactions on Neural Networks and Learning Systems},
  25(1):12--26, 2013.

\bibitem{elwell2011incremental}
Ryan Elwell and Robi Polikar.
\newblock Incremental learning of concept drift in nonstationary environments.
\newblock {\em IEEE Transactions on Neural Networks}, 22(10):1517--1531, 2011.

\bibitem{gama2004learning}
Joao Gama, Pedro Medas, Gladys Castillo, and Pedro Rodrigues.
\newblock Learning with drift detection.
\newblock In {\em Brazilian Symposium on Artificial Intelligence}, pages
  286--295. Springer, 2004.

\bibitem{gama2009issues}
Joao Gama, Raquel Sebastiao, and Pedro~Pereira Rodrigues.
\newblock Issues in evaluation of stream learning algorithms.
\newblock In {\em Proceedings of the 15th ACM SIGKDD International Conference
  on Knowledge Discovery and Data Mining}, pages 329--338, 2009.

\bibitem{gama2014survey}
Jo{\~a}o Gama, Indr{\.e} {\v{Z}}liobait{\.e}, Albert Bifet, Mykola Pechenizkiy,
  and Abdelhamid Bouchachia.
\newblock A survey on concept drift adaptation.
\newblock {\em ACM Computing Surveys (CSUR)}, 46(4):1--37, 2014.

\bibitem{gomes2017adaptive}
Heitor~M Gomes, Albert Bifet, Jesse Read, Jean~Paul Barddal, Fabr{\'\i}cio
  Enembreck, Bernhard Pfharinger, Geoff Holmes, and Talel Abdessalem.
\newblock Adaptive random forests for evolving data stream classification.
\newblock {\em Machine Learning}, 106(9):1469--1495, 2017.

\bibitem{gomes2019streaming}
Heitor~Murilo Gomes, Jesse Read, and Albert Bifet.
\newblock Streaming random patches for evolving data stream classification.
\newblock In {\em 2019 IEEE International Conference on Data Mining (ICDM)},
  pages 240--249. IEEE, 2019.

\bibitem{gozuaccik2019unsupervised}
{\"O}mer G{\"o}z{\"u}a{\c{c}}{\i}k, Alican B{\"u}y{\"u}k{\c{c}}ak{\i}r, Hamed
  Bonab, and Fazli Can.
\newblock Unsupervised concept drift detection with a discriminative
  classifier.
\newblock In {\em Proceedings of the 28th ACM International Conference on
  Information and Knowledge Management}, pages 2365--2368, 2019.

\bibitem{gozuaccik2021concept}
{\"O}mer G{\"o}z{\"u}a{\c{c}}{\i}k and Fazli Can.
\newblock Concept learning using one-class classifiers for implicit drift
  detection in evolving data streams.
\newblock {\em Artificial Intelligence Review}, 54(5):3725--3747, 2021.

\bibitem{gu2016concept}
Feng Gu, Guangquan Zhang, Jie Lu, and Chin-Teng Lin.
\newblock Concept drift detection based on equal density estimation.
\newblock In {\em 2016 International Joint Conference on Neural Networks
  (IJCNN)}, pages 24--30. IEEE, 2016.

\bibitem{gulcan2022unsupervised}
Ege~Berkay Gulcan and Fazli Can.
\newblock Unsupervised concept drift detection for multi-label data streams.
\newblock {\em Artificial Intelligence Review}, pages 1--34, 2022.

\bibitem{gunasekara2022online}
Nuwan Gunasekara, Heitor~Murilo Gomes, Bernhard Pfahringer, and Albert Bifet.
\newblock Online hyperparameter optimization for streaming neural networks.
\newblock In {\em 2022 International Joint Conference on Neural Networks
  (IJCNN)}, pages 1--9. IEEE, 2022.

\bibitem{harries1999splice}
Michael Harries and New~South Wales.
\newblock Splice-2 comparative evaluation: Electricity pricing.
\newblock 1999.

\bibitem{hulten2001mining}
Geoff Hulten, Laurie Spencer, and Pedro Domingos.
\newblock Mining time-changing data streams.
\newblock In {\em Proceedings of the Seventh ACM SIGKDD International
  Conference on Knowledge Discovery and Data Mining}, pages 97--106, 2001.

\bibitem{idrees2020heterogeneous}
Mobin~M Idrees, Leandro~L Minku, Frederic Stahl, and Atta Badii.
\newblock A heterogeneous online learning ensemble for non-stationary
  environments.
\newblock {\em Knowledge-Based Systems}, 188:104983, 2020.

\bibitem{iwashita2018overview}
Adriana~Sayuri Iwashita and Joao~Paulo Papa.
\newblock An overview on concept drift learning.
\newblock {\em IEEE {A}ccess}, 7:1532--1547, 2018.

\bibitem{katakis2010tracking}
Ioannis Katakis, Grigorios Tsoumakas, and Ioannis Vlahavas.
\newblock Tracking recurring contexts using ensemble classifiers: an
  application to email filtering.
\newblock {\em Knowledge and Information Systems}, 22(3):371--391, 2010.

\bibitem{katakis2008ensemble}
Ioannis Katakis, Grigorios Tsoumakas, and Ioannis~P Vlahavas.
\newblock An ensemble of classifiers for coping with recurring contexts in data
  streams.
\newblock In {\em ECAI}, pages 763--764, 2008.

\bibitem{kolter2007dynamic}
J~Zico Kolter and Marcus~A Maloof.
\newblock Dynamic weighted majority: An ensemble method for drifting concepts.
\newblock {\em The Journal of Machine Learning Research}, 8:2755--2790, 2007.

\bibitem{kolter2005using}
Jeremy~Z Kolter and Marcus~A Maloof.
\newblock Using additive expert ensembles to cope with concept drift.
\newblock In {\em Proceedings of the 22nd International Conference on Machine
  Learning}, pages 449--456, 2005.

\bibitem{krawczyk2017ensemble}
Bartosz Krawczyk, Leandro~L Minku, Jo{\~a}o Gama, Jerzy Stefanowski, and
  Micha{\l} Wo{\'z}niak.
\newblock Ensemble learning for data stream analysis: A survey.
\newblock {\em Information Fusion}, 37:132--156, 2017.

\bibitem{laney20013d}
Doug Laney et~al.
\newblock 3d data management: Controlling data volume, velocity and variety.
\newblock {\em META Group Research Note}, 6(70):1, 2001.

\bibitem{li2020incremental}
Zeng Li, Wenchao Huang, Yan Xiong, Siqi Ren, and Tuanfei Zhu.
\newblock Incremental learning imbalanced data streams with concept drift: The
  dynamic updated ensemble algorithm.
\newblock {\em Knowledge-Based Systems}, 195:105694, 2020.

\bibitem{liu2021comprehensive}
Weike Liu, Hang Zhang, Zhaoyun Ding, Qingbao Liu, and Cheng Zhu.
\newblock A comprehensive active learning method for multiclass imbalanced data
  streams with concept drift.
\newblock {\em Knowledge-Based Systems}, 215:106778, 2021.

\bibitem{losing2016knn}
Viktor Losing, Barbara Hammer, and Heiko Wersing.
\newblock Knn classifier with self adjusting memory for heterogeneous concept
  drift.
\newblock In {\em 2016 IEEE 16th International Conference on Data Mining
  (ICDM)}, pages 291--300. IEEE, 2016.

\bibitem{lu2018learning}
Jie Lu, Anjin Liu, Fan Dong, Feng Gu, Joao Gama, and Guangquan Zhang.
\newblock Learning under concept drift: A review.
\newblock {\em IEEE Transactions on Knowledge and Data Engineering},
  31(12):2346--2363, 2018.

\bibitem{lu2016concept}
Ning Lu, Jie Lu, Guangquan Zhang, and Ramon~Lopez De~Mantaras.
\newblock A concept drift-tolerant case-base editing technique.
\newblock {\em Artificial Intelligence}, 230:108--133, 2016.

\bibitem{minku2011ddd}
Leandro~L Minku and Xin Yao.
\newblock {DDD}: A new ensemble approach for dealing with concept drift.
\newblock {\em IEEE Transactions on Knowledge and Data Engineering},
  24(4):619--633, 2011.

\bibitem{JMLR:v19:18-251}
Jacob Montiel, Jesse Read, Albert Bifet, and Talel Abdessalem.
\newblock Scikit-multiflow: A multi-output streaming framework.
\newblock {\em Journal of Machine Learning Research}, 19(72):1--5, 2018.
\newblock URL: \url{http://jmlr.org/papers/v19/18-251.html}.

\bibitem{nishida2005ace}
Kyosuke Nishida, Koichiro Yamauchi, and Takashi Omori.
\newblock Ace: Adaptive classifiers-ensemble system for concept-drifting
  environments.
\newblock In {\em International Workshop on Multiple Classifier Systems}, pages
  176--185. Springer, 2005.

\bibitem{oza2001online}
Nikunj~C Oza and Stuart~J Russell.
\newblock Online bagging and boosting.
\newblock In {\em International Workshop on Artificial Intelligence and
  Statistics}, pages 229--236. PMLR, 2001.

\bibitem{qahtan2015pca}
Abdulhakim~A Qahtan, Basma Alharbi, Suojin Wang, and Xiangliang Zhang.
\newblock A pca-based change detection framework for multidimensional data
  streams: Change detection in multidimensional data streams.
\newblock In {\em Proceedings of the 21th ACM SIGKDD International Conference
  on Knowledge Discovery and Data Mining}, pages 935--944, 2015.

\bibitem{ramirez2017survey}
Sergio Ram{\'\i}rez-Gallego, Bartosz Krawczyk, Salvador Garc{\'\i}a, Micha{\l}
  Wo{\'z}niak, and Francisco Herrera.
\newblock A survey on data preprocessing for data stream mining: Current status
  and future directions.
\newblock {\em Neurocomputing}, 239:39--57, 2017.

\bibitem{sethi2017reliable}
Tegjyot~Singh Sethi and Mehmed Kantardzic.
\newblock On the reliable detection of concept drift from streaming unlabeled
  data.
\newblock {\em Expert Systems with Applications}, 82:77--99, 2017.

\bibitem{vardi2020efficiency}
Moshe~Y Vardi.
\newblock Efficiency vs. resilience: what covid-19 teaches computing, 2020.

\bibitem{wang2003mining}
Haixun Wang, Wei Fan, Philip~S Yu, and Jiawei Han.
\newblock Mining concept-drifting data streams using ensemble classifiers.
\newblock In {\em Proceedings of the Ninth ACM SIGKDD International Conference
  on Knowledge Discovery and Data Mining}, pages 226--235, 2003.

\bibitem{widmer1996learning}
Gerhard Widmer and Miroslav Kubat.
\newblock Learning in the presence of concept drift and hidden contexts.
\newblock {\em Machine Learning}, 23(1):69--101, 1996.

\bibitem{ijcai2022p788}
Liheng Yuan, Heng Li, Beihao Xia, Cuiying Gao, Mingyue Liu, Wei Yuan, and Xinge
  You.
\newblock Recent advances in concept drift adaptation methods for deep
  learning.
\newblock In Lud~De Raedt, editor, {\em Proceedings of the Thirty-First
  International Joint Conference on Artificial Intelligence, {IJCAI-22}}, pages
  5654--5661. International Joint Conferences on Artificial Intelligence
  Organization, 7 2022.
\newblock Survey Track.
\newblock URL: \url{https://doi.org/10.24963/ijcai.2022/788}.

\bibitem{zhang2019resample}
Hang Zhang, Weike Liu, Shuo Wang, Jicheng Shan, and Qingbao Liu.
\newblock Resample-based ensemble framework for drifting imbalanced data
  streams.
\newblock {\em IEEE {A}ccess}, 7:65103--65115, 2019.

\end{thebibliography}


\begin{thebibliography}{1}

\bibitem{IEEEhowto:kopka}
H.~Kopka and P.~W. Daly, \emph{A Guide to {\LaTeX}}, 3rd~ed.\hskip 1em plus
  0.5em minus 0.4em\relax Harlow, England: Addison-Wesley, 1999.

\bibitem{IEEEhowto:kopka}
H.~Kopka and P.~W. Daly, \emph{A Guide to {\LaTeX}}, 3rd~ed.\hskip 1em plus
  0.5em minus 0.4em\relax Harlow, England: Addison-Wesley, 1999.
\end{thebibliography}
%
\par
\vskip -2\baselineskip plus -1fil 
\begin{IEEEbiography}
    [{\includegraphics[width=1in,height=1.25in,clip,keepaspectratio]{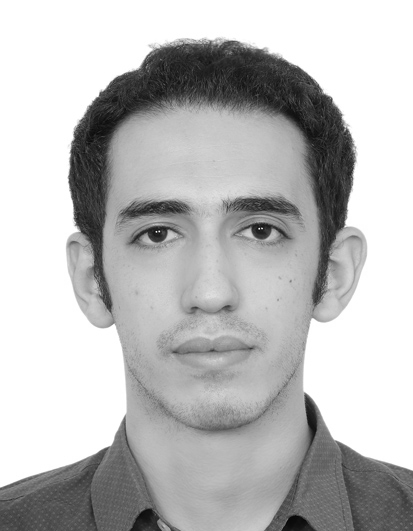}}]{Sepehr Bakhshi}
  received his B.S. degree in computer
engineering from University of Bonab, Bonab, Iran. In 2019, he joined the Bilkent Information Retrieval Group (BilIR). 
He conducted his MS research under the supervision of Prof. Fazlı Can at Bilkent University. His current research is focused on data stream mining, natural language processing and information retrieval.
\end{IEEEbiography}
\vskip -2\baselineskip plus -1fil 
\begin{IEEEbiography}
    [{\includegraphics[width=1in,height=1.25in,clip,keepaspectratio]{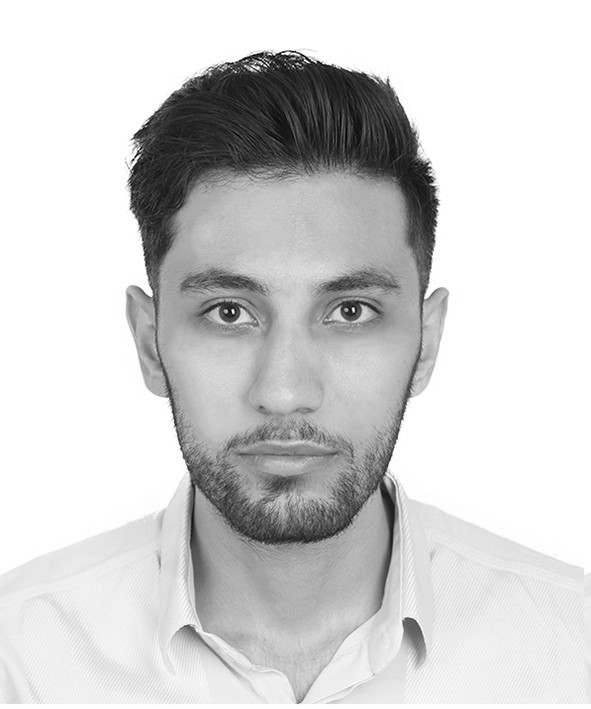}}]{Pouya Ghahramanian}
  Pouya Ghahramanian is a member of Bilkent Information Retrieval Group (BilIR). He conducted his MS research under the supervision of Prof. Fazlı Can at Bilkent University. He obtained his B.S. degree in computer software engineering from Iran University of Science and Technology (IUST) in 2018. His research interests include machine learning, natural language processing  and data stream mining.
\end{IEEEbiography}
\vskip -2\baselineskip plus -1fil 
\begin{IEEEbiography}
    [{\includegraphics[width=1in,height=1.25in,clip,keepaspectratio]{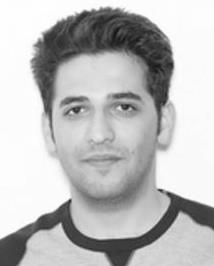}}]{Hamed Bonab} received the Ph.D. degree in computer science from the College of Information and
Computer Sciences, University of Massachusetts Amherst, Amherst, MA, USA. During his Ph.D. he worked with Prof. James Allan in the Center for Intelligent Information Retrieval (CIIR). He obtained the B.S. degree in computer engineering from the Iran University of Science and Technology, Tehran, Iran, and the M.S. degree in computer engineering from Bilkent University, Ankara, Turkey. His current research interests include information retrieval, natural language processing, machine learning, and stream processing. He is currently an Applied Scientist II at Amazon.com Inc. working on various product search problems.
\end{IEEEbiography}
\vskip -2\baselineskip plus -1fil 
\begin{IEEEbiography}
    [{\includegraphics[width=1in,height=1.25in,clip,keepaspectratio]{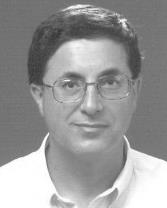}}]{Fazli Can}
 received the B.S. and M.S. degrees in
electrical-electronics and computer engineering
and the Ph.D. degree in computer engineering from
Middle East Technical University, Ankara, Turkey,
in 1976, 1979, and 1985, respectively. He conducted
his Ph.D. research under the supervision of Prof. E.
Ozkarahan; at Arizona State University, Tempe, AZ,
USA, and Intel, Chandler, AZ, USA; as a part of the
RAP Database Machine Project.
He is currently a Faculty Member at Bilkent University, Ankara. Before joining Bilkent, he was a
tenured Full Professor at Miami University, Oxford, OH, USA. He co-edited
ACM SIGIR Forum from 1995 to 2002 and is a Co-Founder of the Bilkent
Information Retrieval Group (BilIR), Bilkent University. His interest in dynamic
information processing dates back to his 1993 incremental clustering paper in
ACM Transactions on Information Systems and some other earlier work with
Prof. E. Ozkarahan on dynamic cluster maintenance. His current research
interests include information retrieval and data mining.
\EOD
\end{IEEEbiography}

\end{document}